\documentclass{article}

     \PassOptionsToPackage{numbers, compress}{natbib}
 \usepackage[preprint]{neurips_2025}


\usepackage[utf8]{inputenc} 
\usepackage[T1]{fontenc}    
\usepackage{hyperref}       
\usepackage{url}             
\usepackage{booktabs}       
\usepackage{amsfonts}       
\usepackage{nicefrac}       
\usepackage{microtype}      
\usepackage{xcolor}         

\usepackage{comment}
\usepackage{graphicx}
\usepackage{amsmath,amssymb, mathtools}
\usepackage{amsthm}

\newtheorem{definition}{Definition}

\newtheorem{lemma}{Lemma}
\newtheorem{proposition}{Proposition}

\usepackage{enumitem}

\newenvironment{customlem}[1]
  {\innercustomlem}
  {\endinnercustomlem}

\newenvironment{customprop}[1]
  {\innercustomprop}
  {\endinnercustomprop}

\DeclareMathOperator{\Prob}{\mathbb{P}}
\DeclareMathOperator{\E}{\mathbb{E}}  
\DeclareMathOperator{\IndicatorFunc}{\mathbf{1}}  

\usepackage[skins,theorems]{tcolorbox}
\usepackage{graphicx}

\definecolor{backg_blue}{RGB}{240, 248, 255}
\definecolor{lightorange}{RGB}{255, 200, 124}

\title{Balance Equation-based Distributionally Robust Offline Imitation Learning}

%

\author{%
  Rishabh Agrawal \\
  University of Southern California\\
  \texttt{rishabha.edu} \\
   \And
   Yusuf Alvi \\
  University of Southern California\\
  \texttt{yalvi@usc.edu} \\
   \And
   Rahul Jain \\
  University of Southern California\\
  \texttt{rahul.jain@usc.edu} \\
   \And
   Ashutosh Nayyar \\
  University of Southern California\\
  \texttt{ashutosn@usc.edu} \\
}

\begin{document}

\maketitle

\begin{abstract}
  Imitation Learning (IL) has proven highly effective for robotic and control tasks where manually designing reward functions or explicit controllers is infeasible. However, standard IL methods implicitly assume that the environment dynamics remain fixed between training and deployment. In practice, this assumption rarely holds where modeling inaccuracies, real-world parameter variations, and adversarial perturbations can all induce shifts in transition dynamics, leading to severe performance degradation. We address this challenge through Balance Equation-based Distributionally Robust Offline Imitation Learning, a framework that learns robust policies solely from expert demonstrations collected under nominal dynamics, without requiring further environment interaction. We formulate the problem as a distributionally robust optimization over an uncertainty set of transition models, seeking a policy that minimizes the imitation loss under the worst-case transition distribution. Importantly, we show that this robust objective can be reformulated entirely in terms of the nominal data distribution, enabling tractable offline learning. Empirical evaluations on continuous-control benchmarks demonstrate that our approach achieves superior robustness and generalization compared to state-of-the-art offline IL baselines, particularly under perturbed or shifted environments.
\end{abstract}

\section{Introduction}

Imitation Learning (IL) has become a cornerstone of modern AI, enabling agents to acquire complex skills in domains where manually engineering a reward function is difficult or impractical \citep{zitkovich2023rt}. In applications ranging from robotic manipulation to autonomous driving, it is often far simpler to provide expert demonstrations than to design a dense, well-shaped reward signal that captures the desired behavior \citep{reed2022generalist}. IL frameworks leverage these demonstrations to learn control policies directly, bypassing the challenges of reward engineering \citep{pmlr-v267-bai25e}. However, this success is predicated on the fragile assumption that the transition dynamics at test time match those under which the demonstrations were collected \citep{viano2021robust,chae2022robust}. In real-world systems, this rarely holds \citep{peng2018sim}. Physical parameters such as mass, friction, or motor response can drift over time \citep{lyu2024odrl}; unmodeled effects like actuator delay or unexpected contacts may arise \citep{wang2023addressing,jiang2024contact}; and the environment itself can undergo adversarial perturbations \citep{hsu2024reforma}. 
This creates a \emph{robustness gap}, as policies effective in training environments degrade under perturbed dynamics \citep{hoque2024intervengen}.

A natural way to mitigate this robustness gap is to introduce interaction across multiple source environments, the target environment, or both during training. Under multi-source interaction, RIME \citep{chae2022robust} learns a single policy that generalizes over varying dynamics by minimizing a Jensen–Shannon risk across expert policies, while ADAIL \citep{lu2020adail} leverages dynamics embeddings and domain-adversarial objectives to enforce invariance. In source-only interaction, AIRL \citep{fu2018learning} recovers reward functions that disentangle expert intent from environment-specific dynamics. For target-only adaptation, DYNAIL \citep{liu2023dynamics} fine-tunes policies by distinguishing source and target dynamics using limited target rollouts. Cross-domain alignment methods such as GAMA \citep{kim2020domain} and Cycle-Consistent IL \citep{raychaudhuri2021cross} jointly interact with source and target domains to align MDPs through latent or state–action correspondences. Another line of work performs few-shot domain adaptation, where a source-trained policy is fine-tuned on a handful of target demonstrations to bridge domain gaps \citep{yu2018one,brohan2022rt}. While effective, these approaches still rely on interaction or target-domain data, which is impractical in settings like autonomous driving and surgical robotics, where rollouts are unsafe or costly \citep{ILSurveyAutonomous}, or when no expert exists under new dynamics.

Motivated by the discussed limits, we study \emph{Distributionally Robust Offline Imitation Learning (DROIL)}. The goal of DROIL is to learn a \emph{robust} policy, using only expert demonstrations from a single nominal environment \emph{without any further interaction}, that remains effective under perturbations in transition dynamics. This enables IL agents to generalize across real-world dynamics variations while training entirely offline from one source of expert data, avoiding unsafe or costly rollouts. The setting is uniquely challenging: (i) standard IL methods such as Behavioral Cloning and Distribution Matching~\citep{firstBC,Kostrikov2020Imitation} implicitly assume identical train–test dynamics, so robustness must be induced purely through the learning objective, as no further interactions are available to infer or adapt to perturbed transitions; (ii) Robustification under transition uncertainty yields a minimax problem whose inner maximization is challenging to solve, as it depends on unknown perturbed dynamics. 

To address these challenges, we formulate robustness as a minimax optimization over an ambiguity set centered on the nominal transition dynamics and reformulate it in the $f$-divergence–based occupancy measure space. To ensure optimization only over stationary distributions, we impose a \emph{Balance Equation} derived from Bellman flow consistency. By introducing a \emph{triplet occupancy representation} and invoking strong duality, we transform the intractable adversarial maximization into a closed-form importance-weighting objective defined entirely under nominal expert data. 

Our main contributions include: (i) \emph{BE-DROIL}, the first distributionally robust imitation learning framework in the strict offline regime under Balance Equation constraint, using demonstrations from a single nominal environment;
(ii) a novel theoretically grounded triplet-occupancy representation that eliminates explicit dependence on unknown dynamics, enabling robustness purely in the occupancy space;
(iii) a scalable and practical alternating optimization algorithm based on converting the robust optimization into a tractable closed-form importance-weighting scheme under data collected from expert in nominal environment; and
(iv) strong empirical robustness across continuous-control benchmarks under diverse transition perturbations.

\section{Related Work}
\noindent\textbf{Offline Imitation Learning.}
Behavioral Cloning (BC) is a classical baseline in offline imitation learning \citep{firstBC}, formulating imitation as supervised learning from expert trajectories that map states to actions.
However, by disregarding environment dynamics and often operating with limited expert data, BC is prone to covariate shift and compounding error, which limit its generalization ability \citep{bcLimitation1}.
To incorporate transition structure, the \emph{\underline{Di}stribution \underline{C}orrection \underline{E}stimation (DICE)} family \citep{Kostrikov2020Imitation,kim2022demodice,mao2024odice} estimates stationary occupancy ratios without requiring explicit access to the behavior or learner policy, enabling off-policy evaluation and control under the stationary distribution. 

Our formulation builds on this occupancy-based view but differs fundamentally: its importance weights stem from a robust inner maximization over transition uncertainty rather than policy mismatch. Recent studies \citep{agrawal2024policy,agrawal2025markov,pmlr-v283-agrawal25a} show that offline IL must satisfy a Markov balance relation linking the expert policy and transition dynamics. Still, BC, DICE, and Markov-balance methods all assume identical train–test dynamics and therefore fail under transition shifts.

\noindent\textbf{Robust Reinforcement Learning.} This line of work tackles transition uncertainty by optimizing worst-case returns over an ambiguity set of transition kernels. Classical approaches use rectangular uncertainty sets and min–max dynamic programming \citep{nilim2005robust,iyengar2005robust}, while later works extend to distributionally robust formulations based on total variation, $f$-divergence, or Wasserstein metrics \citep{derman2020distributional,yu2023fast,pmlr-v283-panaganti25a}. These methods require reward feedback and often online rollouts to improve robustness. Offline variants such as robust fitted Q-iteration estimate worst-case value functions from fixed datasets using nominal-measure reformulations \citep{panaganti2022robust}. However, they still rely on known rewards, making them unsuitable for imitation learning, where only expert demonstrations are available. Our work builds on these robust foundations but reinterprets them for IL, without rewards or interaction, via a fully offline nominal-data formulation robust to transition uncertainty.

\noindent\textbf{Robust Imitation Learning.}
Much of IL research focuses on robustness to \emph{demonstration imperfections} (e.g., suboptimal or noisy labels) and stability in supervised imitation. Several methods learn from imperfect or negative demonstrations or denoise labels to mitigate covariate shift without altering dynamics \citep{pmlr-v97-wu19a,pmlr-v130-tangkaratt21a}, while others improve stability through noise injection or corrective querying \citep{pmlr-v78-laskey17a,pmlr-v15-ross11a}. Meta-IL and multi-task IL/IRL frameworks enhance adaptability across related tasks via shared experience but still assume consistent transition dynamics between training and deployment \citep{finn2017one,james2018task,Zhou2020Watch}.
To address \emph{dynamics mismatch}, domain randomization (DR) perturbs physical parameters such as mass or friction during training to improve sim-to-real transfer \citep{peng2018sim,huang2021generalization}, yet it requires a high-fidelity simulator, impractical in strictly offline IL where only nominal expert data exist.
Recent methods enhance BC robustness under different failure modes: \citet{wu2025robust} enforce Lipschitz regularization to improve stability against input perturbations, while DRIL–DICE \citep{seo2024mitigating} mitigates covariate shift by introducing $f$-divergence regularization to align state–action distributions under nominal dynamics. However, both assume fixed train–test transition models. Among works explicitly considering transition uncertainty, RIME learns policies that generalize across families of MDPs via online rollouts for robustness estimation \citep{chae2022robust}. The most closely related method to ours is \emph{Distributionally Robust Behavior Cloning (DRBC)}, which formulates behavior cloning under total-variation ambiguity sets around the transition model and optimizes for worst-case imitation loss \citep{panaganti2023distributionally}.
However, DRBC maximizes over arbitrary distributions in the uncertainty set, many of which do not correspond to stationary occupancy measures induced by the expert policy under any admissible transition kernel. In contrast, our framework constrains the adversary to operate only over \emph{valid stationary distributions} consistent with perturbed dynamics and generalizes beyond total variation to a broader class of $f$-divergence ambiguity sets, yielding a less pessimistic and more data-aligned robust objective.

\section{Preliminaries}
\noindent\textbf{The Imitation Learning Problem.} We consider an infinite-horizon discounted Markov decision process (MDP) $M = (\mathcal{S}, \mathcal{A}, T^{o}, r, \gamma, \mu)$,  
where $\mathcal{S}$ and $\mathcal{A}$ denote the state and action spaces,  
$T^{o}(s' \mid s, a)$ represents the transition dynamics,  
$r: \mathcal{S} \times \mathcal{A} \to \mathbb{R}$ is the reward function, $\gamma \in (0,1)$ is the discount factor, and $\mu$ is the initial state distribution such that $s_0 \sim \mu$. A policy $\pi$ defines a distribution over actions conditioned on the state,  
$\pi(a_t|s_t) = \Prob_{\pi}(A_t = a_t \mid S_t = s_t)$ for each time step $t$. The discounted occupancy measure induced by a policy $\pi$ on the $\mathcal{S}\times\mathcal{A}\times\mathcal{S}$ under the nominal transition kernel $T^{o}$ is defined as
$
d^{\pi}_{T^{o}}(s,a,s')
= (1-\gamma)\mathbb{E}_\pi\!\left[\sum_{t=0}^{\infty} \gamma^t\IndicatorFunc_{s_t = s, a_t = a, s_{t+1} = s'}\right],
$
where the expectation is taken over trajectories generated by  
$a_t \sim \pi(\cdot|s_t)$ and $s_{t+1} \sim T^{o}(\cdot|s_t, a_t)$,  
with $s_0$ drawn from $\mu$. The marginal state–action and state-only occupancy measures are then given by $d^{\pi}_{T^{o}}(s, a) = \sum_{s'} d^{\pi}_{T^{o}}(s,a,s')$ and $d^{\pi}_{T^{o}}(s) = \sum_{a,s'} d^{\pi}_{T^{o}}(s,a,s')$ respectively.

In the \emph{offline imitation learning (OIL)} setting,  
an agent has access only to demonstration trajectories collected from an expert policy $\pi_D$,  
denoted as $D = \{(s_0, a_0), (s_1, a_1), \ldots\}$,  
without any additional interaction with the environment.  
No reward information is provided in $D$,  
and the goal is to learn a \textit{policy} $\pi^\star_{oil}$ that mimics the expert’s behavior as closely as possible using this static dataset.  
Formally, the general learning objective is expressed as \citep{ImitationLearningPresentation}
\begin{equation}
    \pi^{\star}_{oil} \in \arg\min_{\pi \in \Pi}\;
    \mathbb{E}_{s \sim d^{\pi_D}_{T^{o}}}
    \big[\mathcal{L}(\pi(\cdot|s), \pi_D(\cdot|s))\big],
    \label{eq:ILObjective}
\end{equation}
where $\Pi$ denotes the set of stationary stochastic policies, and $\mathcal{L}$ is a divergence or distance metric (e.g., mean squared error or KL divergence) measuring the gap between learner and expert policies.

\vspace{0.5em}
\noindent\textbf{Distributionally Robust Setup.} We adopt the robust Markov decision process (RMDP) formulation~\citep{nilim2005robust}, defined by the tuple
$M_{\text{rob}} = (\mathcal{S}, \mathcal{A}, \mathcal{T}, r, \gamma, \mu)$,
which extends the standard MDP by introducing an uncertainty set $\mathcal{T}(\rho')$ over transition models.
The uncertainty set is factorized across all state–action pairs as
\begin{equation}
\begin{aligned}
\mathcal{T}(\rho') &= \mathop{\scalebox{0.9}{$\bigotimes$}}\limits_{(s,a) \in \mathcal{S} \times \mathcal{A}}
\mathcal{T}_{s,a}(\rho') \quad \quad \quad \\
\text{with} \quad
\mathcal{T}_{s,a}(\rho') &= \left\{ T_{s,a} \in \Delta(\mathcal{S}) \;:\; D_{TV}(T_{s,a}, T^{o}_{s,a}) \le \rho' \right\}.
\end{aligned}
\label{eqn:uncertainty_set}
\end{equation}
Here, $T^{o} = \left(T^{o}_{s,a}, {(s,a) \in \mathcal{S} \times \mathcal{A}}\right)$ denotes the nominal transition kernel,
$D_{TV}(\cdot, \cdot)$ measures the Total Variation distance between two distributions, $\Delta(\mathcal{S})$ is the probability distribution over $\mathcal{S}$,
and $\rho' > 0$ specifies the radius of the ambiguity set that determines the degree of robustness.
The \emph{distributionally robust offline imitation learning (DROIL)} problem is then defined as
\begin{equation}
\label{eqn:robust-il-tv}
\pi^{\star}_{droil} \in \arg\min_{\pi \in \Pi}\;
\max_{T \in \mathcal{T}}
\;
\E_{s \sim d^{\pi_D}_{T}} \big[\mathcal{L}(\pi(\cdot|s), \pi_D(\cdot|s))\big].
\end{equation}
where $d^{\pi_D}_T$ is the occupancy measure induced by expert policy $\pi_D$ in transition kernel $T$. The objective of this problem is to learn a \textit{robust policy} $\pi^{\star}_{droil}$ that minimizes the worst-case imitation loss evaluated over the state distributions induced by the expert policy $\pi_D$ under all admissible transition kernels $T \in \mathcal{T}(\rho')$.
The problem can equivalently be viewed as a game in which the adversary selects a transition kernel $T \in \mathcal{T}(\rho')$ that maximizes the imitation loss for a given learner’s policy $\pi$, while the learner seeks a policy $\pi^{\star}_{droil}$ that minimizes this adversarial objective.
In the offline setting, however, the key challenge is that expert demonstrations are collected only under the nominal transition kernel $T^{o}$, and no additional on-policy interactions are permitted in any $T \in \mathcal{T}(\rho')$.

\begin{definition}[f-divergence]
Let $f: \mathbb{R}_{+} \rightarrow \mathbb{R}$ be a continuous and convex function,  
and let $p, q \in \Delta(\mathcal{X})$ denote two probability distributions over a domain $\mathcal{X}$.  
The $f$-divergence between $p$ and $q$ is defined as
\begin{equation}
D_{f}(p \,\|\, q)
= \mathbb{E}_{x \sim q}\!\left[f\!\left(\frac{p(x)}{q(x)}\right)\right].
\end{equation}
A widely used example of an $f$-divergence is the Kullback–Leibler (KL) divergence,  
which arises when $f(u) = u \log u$.  
\end{definition}

\section{Methodology}
We begin by deriving a theoretical result that quantifies how uncertainty in the transition model influences the induced occupancy measure for a fixed policy.
\begin{lemma}
For any policy $\pi$ and transition kernel $T \in \mathcal{T}(\rho')$, the following inequality holds:
\[
D_{\mathrm{TV}}\!\left(d^{\pi}_{T}, \, d^{\pi}_{T^{o}}\right)
\le \frac{\rho'}{1 - \gamma},
\]
where $d^{\pi}_T, d^{\pi}_{T^0}$ are the $(s,a,s')$ occupancy measures induced by policy $\pi$ under transition kernels $T$ and $T^0$ respectively.
\label{lemma:D_TV_mismatch_s_a_s_prime}
\end{lemma}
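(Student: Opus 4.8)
The plan is to unroll the triplet occupancy measure over time and bound the per-timestep discrepancy, letting the discount factor keep the resulting geometric series convergent. Writing $\nu_t^{T}(s) := \Prob_{T}(s_t = s)$ for the time-$t$ state marginal induced by $\pi$ under kernel $T$, I would first note the factorization
\[
d^{\pi}_{T}(s,a,s') = (1-\gamma)\sum_{t=0}^{\infty} \gamma^t\, \nu_t^{T}(s)\,\pi(a\mid s)\,T(s'\mid s,a),
\]
so that controlling $d^{\pi}_{T} - d^{\pi}_{T^{o}}$ reduces to controlling two effects: (i) the drift of the visited-state marginals $\nu_t^{T}$ from $\nu_t^{T^{o}}$, and (ii) the direct discrepancy in the final transition step.

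The key one-step estimate is as follows. Let $P^{T}$ denote the state-to-state Markov operator induced by $\pi$ and $T$, i.e. $(P^{T}\nu)(s') = \sum_{s,a}\nu(s)\pi(a\mid s)T(s'\mid s,a)$. For any distribution $\nu$,
\[
\|(P^{T}-P^{T^{o}})\nu\|_1 \le \sum_{s,a}\nu(s)\pi(a\mid s)\sum_{s'}\bigl|T(s'\mid s,a)-T^{o}(s'\mid s,a)\bigr| \le 2\rho',
\]
using $\sum_{s'}|T(s'\mid s,a)-T^{o}(s'\mid s,a)| = 2 D_{TV}(T_{s,a},T^{o}_{s,a}) \le 2\rho'$ from the rectangular uncertainty set together with $\sum_{s,a}\nu(s)\pi(a\mid s)=1$. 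I would also invoke the standard fact that every Markov operator is an $L_1$-contraction, $\|P^{T}\xi\|_1 \le \|\xi\|_1$ for any signed measure $\xi$. Propagating the one-step bound to all times via the telescoping identity $A^t - B^t = \sum_{k=0}^{t-1}A^{k}(A-B)B^{t-1-k}$ with $A = P^{T}$, $B = P^{T^{o}}$, applied to $\mu$, each summand equals $(P^{T})^{k}(P^{T}-P^{T^{o}})\nu_{t-1-k}^{T^{o}}$; non-expansiveness of $(P^{T})^{k}$ with the one-step estimate bounds it by $2\rho'$, yielding $\|\nu_t^{T} - \nu_t^{T^{o}}\|_1 \le 2\rho' t$.

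Finally I would assemble the triplet bound. For fixed $t$, adding and subtracting $\nu_t^{T^{o}}(s)\pi(a\mid s)T(s'\mid s,a)$ splits the per-timestep difference into a marginal-drift term and a transition-perturbation term; summing absolute values over $(s,a,s')$ bounds the former by $\|\nu_t^{T}-\nu_t^{T^{o}}\|_1 \le 2\rho' t$ and the latter by $2\rho'$, so timestep $t$ contributes at most $2\rho'(t+1)$. Summing the discounted series with $\sum_{t=0}^{\infty} \gamma^t(t+1) = (1-\gamma)^{-2}$ gives $\|d^{\pi}_{T} - d^{\pi}_{T^{o}}\|_1 \le 2\rho'(1-\gamma)(1-\gamma)^{-2} = 2\rho'/(1-\gamma)$, and dividing by $2$ yields the claim. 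The main obstacle is the propagation step: the perturbation corrupts both the visited-state distribution and the last transition at \emph{every} timestep, so without the contraction property of the kernels the errors would compound super-linearly; non-expansiveness caps the growth at linear in $t$, and only then does the discount factor render the infinite sum finite.
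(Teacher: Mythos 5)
Your proof is correct and arrives at the same constant, but it takes a more self-contained route than the paper. The paper factors $d^{\pi}_{T}(s,a,s') = d^{\pi}_{T}(s,a)\,T_{s,a}(s')$ and performs exactly the add-and-subtract split you use in your final step, bounding the transition-perturbation term by $2\rho'$; however, for the marginal-drift term it imports the bound $D_{\mathrm{TV}}\big(d^{\pi}_{T}(s),\,d^{\pi}_{T^{o}}(s)\big)\le \gamma\rho'/(1-\gamma)$ as a black box from Lemma~7 of \citet{panaganti2023distributionally} (restated as Lemma~\ref{lemma:D_TV_mismatch_s}) and lifts it to state--action marginals via $d^{\pi}_{T}(s,a)=\pi(a|s)\,d^{\pi}_{T}(s)$ (Lemma~\ref{lemma:D_TV_mismatch_s_a}). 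You instead prove the drift bound from first principles: the one-step estimate $\|(P^{T}-P^{T^{o}})\nu\|_1\le 2\rho'$, $L_1$ non-expansiveness of Markov operators, and the telescoping identity yield $\|\nu_t^{T}-\nu_t^{T^{o}}\|_1\le 2\rho' t$, and note that discounting this linear drift, $(1-\gamma)\sum_{t\ge 0}\gamma^t\cdot 2\rho' t = 2\gamma\rho'/(1-\gamma)$, exactly reproduces the cited lemma, so your argument subsumes the paper's external ingredient rather than replacing it with something weaker. Your series computation $\sum_{t\ge 0}\gamma^t(t+1)=(1-\gamma)^{-2}$ and the final division by $2$ (both proofs use the $D_{\mathrm{TV}}=\tfrac12\|\cdot\|_1$ convention) are correct, giving the identical bound $\rho'/(1-\gamma)$. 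The trade-off: the paper's version is shorter and modular, reusing a known occupancy-mismatch result, while yours is fully elementary and makes the mechanism transparent --- non-expansiveness caps the error growth at linear in $t$, which is precisely why a single factor of $1/(1-\gamma)$ (and no worse) appears in the bound.
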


Lemma~\ref{lemma:D_TV_mismatch_s_a_s_prime} shows that, for any admissible transition kernel $T \in \mathcal{T}(\rho')$, the occupancy measure over triplets $(s,a,s')$ induced by a policy $\pi$ deviates from that under the nominal dynamics $T^{o}$ by at most $\frac{\rho'}{1-\gamma}$ in Total Variation distance. The complete proof is provided in Appendix~\ref{appendix:theory}. Having established this bound, we can equivalently express the inner maximization in the \textit{DROIL} objective~\eqref{eqn:robust-il-tv} as a maximization over occupancy measures rather than transition kernels, which upper bounds the maximization over transition kernels, where the occupancy uncertainty set is defined as  
\begin{equation}
\begin{aligned}
\mathcal{D}^{\pi_D}(\rho) 
= 
\Big\{
d^{\pi_D}_{T} : 
D_{\mathrm{TV}}\!\left(d^{\pi_D}_{T},\, d^{\pi_D}_{T^{o}}\right)
\le \tfrac{\rho'}{1 - \gamma} = \rho
\Big\}.
\end{aligned}
\label{eqn:occupancy_uncertainty_set}
\end{equation}
As we will show next, optimizing directly over $\mathcal{D}^{\pi_D}$ in~\eqref{eqn:occupancy_uncertainty_set} enables us to formulate our constrained problem entirely in the space of occupancy measures, thereby eliminating explicit dependence on $T \in \mathcal{T}(\rho')$, which is unknown and cannot be sampled from in the purely offline setting.

\vspace{0.5em}
\noindent\textbf{Robust Policy Learning.} 
We now formulate the \textit{DROIL} problem entirely in the space of occupancy measures. Specifically, we consider the following constrained minimax optimization problem:
\begin{align}
    &\text{BE-DROIL} := 
    \min_{\pi} \; \max_{d^{\pi_D}_{T} \ge 0} 
    \; \E_{s \sim d^{\pi_D}_{T}}\!\left[\mathcal{L}(\pi(\cdot|s), \pi_D(\cdot|s))\right]
    \label{eq:rbc_dice_obj} \\
    \text{s.t.} \quad
    &\sum_{s'} d^{\pi_D}_{T}(s,a,s')
    = (1-\gamma)\mu(s)\pi_D(a|s)
    + \gamma \pi_D(a|s)\!
    \sum_{\tilde{s},\tilde{a}}
    d^{\pi_D}_{T}(\tilde{s},\tilde{a},s),
    \quad \forall (s,a) 
    \label{eqn:rbc_dice_constraint_a} \\
    & D_{f}\!\left(d^{\pi_D}_{T}(s,a,s') \,\|\, 
    d^{\pi_D}_{T^{o}}(s,a,s')\right)
    \le \rho.
    \label{eqn:rbc_dice_constraint_b}
\end{align}
Equation~\eqref{eq:rbc_dice_obj} corresponds to the \textit{DROIL} problem defined in \eqref{eqn:robust-il-tv}. The first constraint~\eqref{eqn:rbc_dice_constraint_a} enforces a \emph{Bellman flow conservation (\textbf{balance equation})} condition, ensuring that $d^{\pi_D}_{T}(s,a,s')$ represents a valid stationary occupancy measure realizable under the expert policy $\pi_D$ and the transition kernel $T$, thereby preserving the temporal dependencies that characterize the underlying MDP. For a detailed treatment of this consistency property, refer to \citep{puterman2014markov, altman2021constrained}. Enforcing this condition prevents the assignment of arbitrary probability mass to transitions that cannot occur under the expert’s policy in the underlying MDP. Based on the occupancy uncertainty set defined in~\eqref{eqn:occupancy_uncertainty_set}, the second constraint~\eqref{eqn:rbc_dice_constraint_b} bounds the deviation of the occupancy measure of the expert's policy $\pi_D$ under perturbed dynamics from that under nominal dynamics within an $f$-divergence ball of radius $\rho$. For $\rho=0$, the formulation reduces to non-robust imitation learning in \eqref{eq:ILObjective}, but with Bellman flow consistency enforced in the nominal environment.
Although we employ a general $f$-divergence formulation here, we later relate it to the Total Variation bound established in Lemma~\ref{lemma:D_TV_mismatch_s_a_s_prime}.

\vspace{0.5em}
\noindent\textbf{Why the Triplet $(s,a,s')$ Occupancy Measure.}
A key design choice in our formulation is to define the occupancy measure over triplets $(s,a,s')$, rather than the conventional state–action pair $(s,a)$.
If the occupancy were defined only on $(s,a)$, the Bellman flow constraint would explicitly depend on the transition kernel $T(s'|s,a)$ which is not known for any arbitrary $T \in \mathcal{T}(\rho')$ and cannot be estimated in offline settings where data is available only from a single nominal environment and no further interaction with the environment is permitted.
By incorporating the next state $s'$ directly into $d^{\pi_D}_{T}(s,a,s')$, the flow constraint can be expressed entirely in terms of occupancy measures, eliminating the need to deal with $T$ explicitly. As we will show next, this formulation therefore enables robust policy learning in the fully offline regime.

\vspace{0.5em}
\noindent\textbf{Lagrangian Formulation.} For a fixed learner policy $\pi$, we begin by considering the dual problem of  the inner maximization problem in \eqref{eq:rbc_dice_obj}-\eqref{eqn:rbc_dice_constraint_b}. This dual problem can be written as: 
\begin{equation}
\begin{aligned}
\min_{Q,\tau\ge 0}\; \max_{d^{\pi_D}_{T} \ge 0}
&\; \E_{s \sim d^{\pi_D}_{T}}\!\big[L_{\pi}(s)\big]
- \tau \!\left(
D_f\!\left(d^{\pi_D}_{T}\!\left(s,a,s'\right)\,\big\|\, d^{\pi_D}_{T^o}\!\left(s,a,s'\right)\right) - \rho
\right) \\
&\hspace{-0.5cm}-\sum_{s,a} Q(s,a)\!\left[
\sum_{s'} d^{\pi_D}_{T}(s,a,s') 
- (1-\gamma)\mu(s)\pi_D(a|s)
- \gamma \pi_D(a|s)\!\sum_{\tilde{s},\tilde{a}}
d^{\pi_D}_{T}(\tilde{s},\tilde{a},s)
\right].
\end{aligned}
\label{eqn:lagrangian_regularized_bc}
\end{equation}
where $L_{\pi}(s) = \mathcal{L}(\pi(\cdot|s), \pi_D(\cdot|s))$, $Q(s,a)\in\mathbb{R}$ denotes the Lagrange multiplier associated with the Bellman flow constraint, and $\tau\ge0$ is the multiplier for the $f$-divergence constraint.  

Because the inner maximization problem in~\eqref{eq:rbc_dice_obj}-\eqref{eqn:rbc_dice_constraint_b} is convex and admits a strictly feasible point (i.e. $d^{\pi_D}_{T^o}$), Slater’s condition~\citep{boyd2004convex} ensures that strong duality holds. 
Consequently, the dual problem in \eqref{eqn:lagrangian_regularized_bc} attains the same optimal value as the original maximization over $d^{\pi_D}_{T}$.

To simplify~\eqref{eqn:lagrangian_regularized_bc}, we expand and reorganize several terms as follows.
\begin{equation*}
\begin{aligned}
\sum_{s,a} Q(s,a)\!\left[(1-\gamma)\mu(s)\pi_D(a|s)\right]
&= (1-\gamma)\, \E_{s\sim\mu,\,a\sim\pi_D(\cdot|s)}[Q(s,a)], \\
\sum_{s,a} Q(s,a)\!\left[\gamma \pi_D(a|s)\sum_{\tilde{s},\tilde{a}} d^{\pi_D}_{T}(\tilde{s},\tilde{a},s)\right]
&= \sum_{s',a'} Q(s',a')\!\left[\gamma \pi_D(a'|s') \sum_{s,a} d^{\pi_D}_{T}(s,a,s')\right] \\
&= \sum_{s,a,s'} d^{\pi_D}_{T}(s,a,s')\,
   \Big[\gamma \sum_{a'} Q(s',a')\pi_D(a'|s') \Big] \\
&= \gamma\, \E_{s,a,s'\sim d^{\pi_D}_{T}}\!
   \left[\E_{a'\sim\pi_D(\cdot|s')}[Q(s',a')]\right], \\
\sum_{s,a} Q(s,a)\!\left[-\sum_{s'} d^{\pi_D}_{T}(s,a,s')\right]
&= -\, \E_{s,a,s'\sim d^{\pi_D}_{T}}[Q(s,a)].
\end{aligned}
\end{equation*}

Substituting these identities into~\eqref{eqn:lagrangian_regularized_bc} yields the simplified Lagrangian:
\begin{equation}
\begin{aligned}
\min_{Q,\tau\ge 0}\; \max_{d^{\pi_D}_{T} \ge 0}
&\; (1-\gamma)\E_{s\sim\mu,\,a\sim\pi_D(\cdot|s)}[Q(s,a)]
- \tau\, D_{f}\!\left(d^{\pi_D}_{T}(s,a,s') \,\|\, d^{\pi_D}_{T^{o}}(s,a,s')\right) + \rho\tau \\
&\quad + \E_{s,a,s'\sim d^{\pi_D}_{T}}\!\Big[L_{\pi}(s)
+ \gamma \E_{a'\sim\pi_D(\cdot|s')}[Q(s',a')] - Q(s,a)\Big].
\end{aligned}
\label{eqn:simplified_lagrangian_regularized_bc}
\end{equation}

This Lagrangian cannot be directly optimized using offline data because it depends on the unknown $d^{\pi_D}_{T}(s,a,s')$ for arbitrary $T \in \mathcal{T}(\rho')$.  
Since only expert trajectories under the nominal kernel $T^o$ are available, we define an importance ratio
\[
w(s,a,s') = \frac{d^{\pi_D}_{T}(s,a,s')}{d^{\pi_D}_{T^o}(s,a,s')}.
\]
Using the definition of $f$-divergence, we can rewrite~\eqref{eqn:simplified_lagrangian_regularized_bc} as
\begin{equation}
\begin{aligned}
\min_{Q,\tau\ge 0}\; \max_{w \ge 0}\; 
& (1-\gamma)\E_{s\sim\mu,\,a\sim\pi_D(\cdot|s)}[Q(s,a)] + \rho\tau \\
&\quad + \E_{s,a,s'\sim d^{\pi_D}_{T^o}}\!\left[-\tau\, f(w(s,a,s')) + w(s,a,s')\, e_{Q,\pi}(s,a,s')\right],
\end{aligned}
\label{eqn:final_lagrangian_constrained_bc}
\end{equation}
where $ 
e_{Q,\pi}(s,a,s') = L_{\pi}(s) + \gamma\E_{a'\sim\pi_D(\cdot|s')}[Q(s',a')] - Q(s,a).$

At this point, it is important to note that the original optimization problem in~\eqref{eq:rbc_dice_obj}–\eqref{eqn:rbc_dice_constraint_b} involved a maximization over unknown occupancy distributions $d^{\pi_D}_{T}$ for any $T \in \mathcal{T}(\rho')$.  
Through the above reformulation, we convert this intractable objective into one that depends only on samples drawn from the expert policy under the nominal dynamics $T^{o}$.  
This transformation is crucial, as it enables practical optimization in a strictly offline setting using expert demonstration data alone.
\begin{proposition}
\label{proposition:optimal_importance_weight}
For $\tau > 0$, the inner maximization in~\eqref{eqn:final_lagrangian_constrained_bc} admits the solution
\[
w^{\star}_{Q,\tau,\pi}(s,a,s') =
\max\!\left(0,\,(f')^{-1}\!\left(\frac{e_{Q,\pi}(s,a,s')}{\tau}\right)\right).
\]
For $\tau = 0$, $w^{\star}_{Q,\tau,\pi}(s,a,s') = +\infty$ if $e_{Q,\pi}(s,a,s')>0$ and $0$ otherwise.
\end{proposition}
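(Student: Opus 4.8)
The plan is to exploit the fact that, in the objective of~\eqref{eqn:final_lagrangian_constrained_bc}, the importance ratio $w$ enters only through the expectation $\E_{s,a,s'\sim d^{\pi_D}_{T^o}}[-\tau f(w(s,a,s')) + w(s,a,s')\,e_{Q,\pi}(s,a,s')]$, while the remaining terms $(1-\gamma)\E_{s\sim\mu,\,a\sim\pi_D}[Q(s,a)]+\rho\tau$ are constant in $w$. Writing this expectation as the $d^{\pi_D}_{T^o}$-weighted sum over triplets and observing that the value of $w$ at one triplet does not influence the integrand at any other, the constrained maximization $\max_{w\ge 0}$ decouples into a family of independent scalar problems, one for each triplet with $d^{\pi_D}_{T^o}(s,a,s')>0$: maximize $\phi(w)=-\tau f(w)+w\,e$ over $w\ge 0$, where $e:=e_{Q,\pi}(s,a,s')$. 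On the null set where $d^{\pi_D}_{T^o}=0$ the choice of $w$ is immaterial. The task thus reduces to solving this one-dimensional problem.

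For $\tau>0$, I would argue that $\phi$ is concave on $[0,\infty)$ because $f$ is convex (so $-\tau f$ is concave) and $w\,e$ is linear; hence any stationary point is a global maximizer. Differentiating gives $\phi'(w)=-\tau f'(w)+e$, so the unconstrained stationarity condition $\phi'(w)=0$ is equivalent to $f'(w)=e/\tau$, i.e.\ $w=(f')^{-1}(e/\tau)$, using that $f'$ is nondecreasing and hence invertible on its range. To enforce $w\ge 0$, I would invoke the standard KKT/projection argument for a concave function on a half-line: if the unconstrained optimum $(f')^{-1}(e/\tau)$ is nonnegative it remains feasible and optimal; otherwise $\phi$ is nonincreasing on $[0,\infty)$ (its peak lies to the left of $0$) and the constrained maximum is attained at the boundary $w=0$. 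Both cases are captured by $w^{\star}_{Q,\tau,\pi}(s,a,s')=\max(0,(f')^{-1}(e_{Q,\pi}(s,a,s')/\tau))$, proving the first claim.

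For $\tau=0$, the per-triplet objective collapses to the linear map $\phi(w)=w\,e$. When $e=e_{Q,\pi}(s,a,s')>0$ this is strictly increasing and unbounded above on $w\ge 0$, so the supremum is $+\infty$ (attained only in the limit $w\to\infty$); when $e\le 0$ it is nonincreasing on $[0,\infty)$, so the maximum is attained at $w=0$. This yields exactly the stated boundary-case formula.

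The main obstacle I anticipate is the regularity needed to write $(f')^{-1}$ cleanly: for the inversion step to be literally valid one needs $f$ differentiable with a strictly increasing derivative (strict convexity), so that $f'$ is a bijection onto its range. For merely convex $f$ the derivative may be multivalued or the equation $f'(w)=e/\tau$ may have no interior solution, in which case the argument must fall back on subdifferentials and the convex-analytic optimality condition $e/\tau\in\partial f(w)$, with the formula interpreted through a generalized inverse of $\partial f$. Since the $f$-divergences used in practice (e.g.\ $f(u)=u\log u$) are strictly convex and smooth, this technical caveat does not affect applicability; I would state these standing assumptions on $f$ explicitly, after which the pointwise argument above goes through verbatim.
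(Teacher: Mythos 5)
Your proposal is correct and takes essentially the same approach as the paper: both decouple the objective into independent per-triplet scalar problems (with the $d^{\pi_D}_{T^o}$-null triplets handled by an arbitrary consistent choice) and then solve the one-dimensional concave maximization of $-\tau f(w)+w\,e_{Q,\pi}$ over $w\ge 0$, your direct concavity/projection argument being equivalent to the paper's explicit KKT verification, which constructs the multiplier $\kappa^{\star}(s,a,s')=\tau f'(0)-e_{Q,\pi}(s,a,s')$ for the boundary case and checks dual feasibility via exactly your inequality $e_{Q,\pi}/\tau\le f'(0)$. Your closing regularity caveat matches the paper's standing assumption that $f$ is strictly convex so that $(f')^{-1}$ exists and is strictly increasing, and your $\tau=0$ treatment (choosing $w^{\star}=0$ in the indifferent case $e_{Q,\pi}=0$) coincides with the paper's convention.
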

The proof of Proposition~\ref{proposition:optimal_importance_weight} is provided in the Appendix \ref{appendix:theory}. Substituting $w^{\star}_{Q,\tau,\pi}$ from Proposition~\ref{proposition:optimal_importance_weight} into~\eqref{eqn:final_lagrangian_constrained_bc} simplifies the problem to a single minimization over $Q$ and $\tau$. This results in a two-stage learning procedure for the original optimization problem in~\eqref{eq:rbc_dice_obj}–\eqref{eqn:rbc_dice_constraint_b}, where the first stage estimates $(Q,\tau)$ under the nominal data distribution, and the second stage updates the policy parameters to minimize the expected weighted loss:
\begin{equation}
\begin{alignedat}{1}
\min_{Q,\tau\ge 0}\quad 
& (1-\gamma)\E_{s\sim\mu,\,a\sim\pi_D(\cdot|s)}[Q(s,a)] + \rho\tau \\
&\quad + \E_{(s,a,s')\sim d^{\pi_D}_{T^o}}\!\left[-\rho f\!\left(w^{\star}_{Q,\tau,\pi}(s,a,s')\right)
    + w^{\star}_{Q,\tau,\pi}(s,a,s')\, e_{Q,\pi}(s,a,s')\right] \\[3mm]
\min_{\pi}\quad 
& \E_{(s,a,s')\sim d^{\pi_D}_{T^o}}\!\left[w^{\star}_{Q,\tau,\pi}(s,a,s')\, L_{\pi}(s)\right].
\end{alignedat}
\label{label:final_optimization_problem}
\end{equation}

Optimization alternates between updating $(Q,\tau)$ and the policy $\pi$ until convergence, yielding a robust policy that minimizes expected loss under worst-case transition perturbations.

\noindent \textbf{Discussion on $f$-divergence.} 
From Lemma~\ref{lemma:D_TV_mismatch_s_a_s_prime}, we established a Total Variation (TV) bound between the occupancy measures induced by a policy under perturbed and nominal transition kernels. 
To extend this result beyond TV, we adopt an $f$-divergence formulation in our constrained optimization problem. 
To keep the constraint theoretically meaningful, Lemma~\ref{lemma:tv-dominated} ensures that any $f$-divergence whose generator is upper bounded by that of TV inherits the same upper bound between the occupancy measures induced by the same policy, as in Lemma~\ref{lemma:D_TV_mismatch_s_a_s_prime}. 
The proof is provided in Appendix~\ref{appendix:theory}. 
Proposition~\ref{proposition:optimal_importance_weight} further requires the generator $f(\cdot)$ to be differentiable with an invertible derivative $f'(\cdot)$ to compute $(f')^{-1}$. 
Since the TV generator $f_{\mathrm{TV}}(t)=\tfrac{1}{2}|t-1|$ is non-differentiable at $t=1$, we employ a smooth approximation, the \emph{SoftTV} generator~\citep{seo2024mitigating}, defined as
\begin{equation}
f_{\mathrm{SoftTV}}(x) = \tfrac{1}{2}\log\!\big(\cosh(x - 1)\big),
\qquad
\big(f'_{\mathrm{SoftTV}}\big)^{-1}(y) = \tanh^{-1}(2y) + 1.
\end{equation}
Lemma~\ref{lemma:softtv_leq_tv} in Appendix~\ref{appendix:theory} shows that $f_{\mathrm{SoftTV}}(x) \le f_{\mathrm{TV}}(x)$ for all $x$, and combining this with Lemma~\ref{lemma:tv-dominated} for $\alpha = 1$ gives
$
D_{f_{\mathrm{SoftTV}}}\!\left(d^{\pi}_{T}(s,a,s')\middle\|d^{\pi}_{T^{o}}(s,a,s')\right) \le \frac{\rho'}{1-\gamma}.
$
In our experiments, we use $f_{\mathrm{SoftTV}}(\cdot)$ as the generator for the $f$-divergence; however, any differentiable generator with an invertible derivative satisfying Lemma~\ref{lemma:tv-dominated} can be used.

\begin{lemma}
Let $f:[0,\infty)\!\to\!\mathbb{R}$ be an $f$-divergence generator with $f(1)=0$ and 
$f(t)\!\le\!\alpha\,f_{\mathrm{TV}}(t)$ for all $t\!\ge\!0$, where 
$f_{\mathrm{TV}}(t)=\tfrac{1}{2}|t-1|$ is the generator of total variation. 
Then, for any policy $\pi$ and transition kernel $T\!\in\!\mathcal{T}(\rho')$,
\[
D_f\!\left(d^{\pi}_{T}(s,a,s')\|d^{\pi}_{T^{o}}(s,a,s')\right)
\le \alpha\,D_{\mathrm{TV}}\!\left(d^{\pi}_{T},d^{\pi}_{T^{o}}\right)
\le \alpha\,\frac{\rho'}{1-\gamma}.
\]
\label{lemma:tv-dominated}
\end{lemma}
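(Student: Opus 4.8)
The plan is to reduce the claim to the pointwise generator comparison $f(t)\le\alpha f_{\mathrm{TV}}(t)$ together with the previously established Total Variation bound. The first inequality is the substantive part, while the second is simply an invocation of Lemma~\ref{lemma:D_TV_mismatch_s_a_s_prime}.

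First I would write out the $f$-divergence directly from its definition. Setting $p=d^{\pi}_{T}(s,a,s')$ and $q=d^{\pi}_{T^{o}}(s,a,s')$ and abbreviating the likelihood ratio by $t(x)=p(x)/q(x)$, where $x$ ranges over triplets $(s,a,s')$, the divergence becomes $D_f(p\|q)=\sum_{x} q(x)\, f\!\big(t(x)\big)$. Next I would apply the hypothesis $f(t)\le\alpha f_{\mathrm{TV}}(t)$ pointwise inside the sum; since each weight $q(x)\ge 0$, monotonicity of the weighted sum gives
\[
D_f(p\|q)\le \alpha\sum_{x} q(x)\, f_{\mathrm{TV}}\!\big(t(x)\big)=\alpha\, D_{f_{\mathrm{TV}}}(p\|q).
\]

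The key observation is then that the generator $f_{\mathrm{TV}}(t)=\tfrac12|t-1|$ recovers exactly the Total Variation distance: substituting $t(x)=p(x)/q(x)$ yields $\sum_x q(x)\,\tfrac12\big|p(x)/q(x)-1\big|=\tfrac12\sum_x\big|p(x)-q(x)\big|=D_{\mathrm{TV}}(p,q)$. Hence $D_f(p\|q)\le\alpha\, D_{\mathrm{TV}}\!\left(d^{\pi}_{T},d^{\pi}_{T^{o}}\right)$, which is the first claimed inequality. For the second, I would directly apply Lemma~\ref{lemma:D_TV_mismatch_s_a_s_prime}, which bounds this Total Variation by $\rho'/(1-\gamma)$, completing the chain.

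The only real subtlety, and the step I would treat most carefully, is the behavior on the support boundary, i.e.\ points where $q(x)=0$ while $p(x)>0$. There the summand must be interpreted through the recession function $f'_\infty=\lim_{t\to\infty}f(t)/t$ (equivalently via the perspective/closure of the generator), and one must check that the hypothesis $f\le\alpha f_{\mathrm{TV}}$ transfers to these recession terms so that the pointwise domination persists in the limit. In the finite-support discrete setting assumed here this is routine, since dividing the pointwise bound by $t$ and letting $t\to\infty$ gives $f'_\infty\le\alpha\,(f_{\mathrm{TV}})'_\infty=\alpha/2$; nonetheless I would state this convention explicitly to keep the argument rigorous.
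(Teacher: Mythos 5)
Your proposal is correct and follows essentially the same route as the paper: the paper's Lemma~\ref{lemma:f_g_divergence_monotonicity} is exactly your pointwise-domination-plus-expectation step, after which both arguments identify $D_{f_{\mathrm{TV}}}$ with $D_{\mathrm{TV}}$ and invoke Lemma~\ref{lemma:D_TV_mismatch_s_a_s_prime}. Your extra remark about handling points with $q(x)=0$ via the recession function is a careful addition the paper omits, but it does not change the substance of the argument.
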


\begin{figure}[t]
\centering

\begin{tcolorbox}[
    width=\columnwidth,
    nobeforeafter, 
    coltitle=black, 
    fonttitle=\fontfamily{lmss}\selectfont\bfseries, 
    title=Hopper,
    halign title=flush center, 
    colback=backg_blue!5,
    colframe=brown!25,
    boxrule=1.5pt,
    left=0pt, 
    right=0pt,
    top=0pt,
    bottom=0pt,
    toptitle=0mm,
    bottomtitle=0mm
]
    \centering
    \includegraphics[width=0.33\columnwidth]{./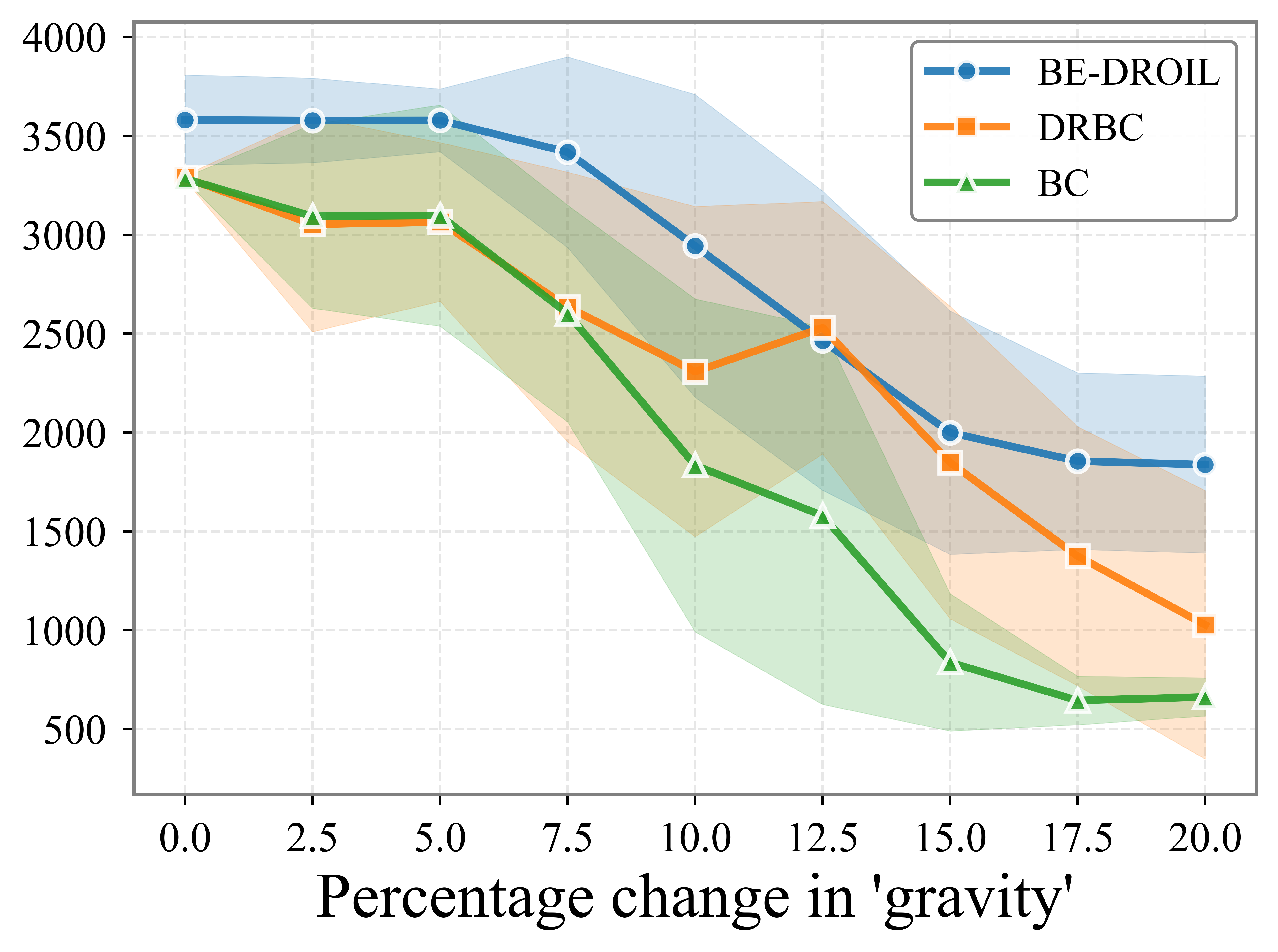}\hfill
    \includegraphics[width=0.33\columnwidth]{./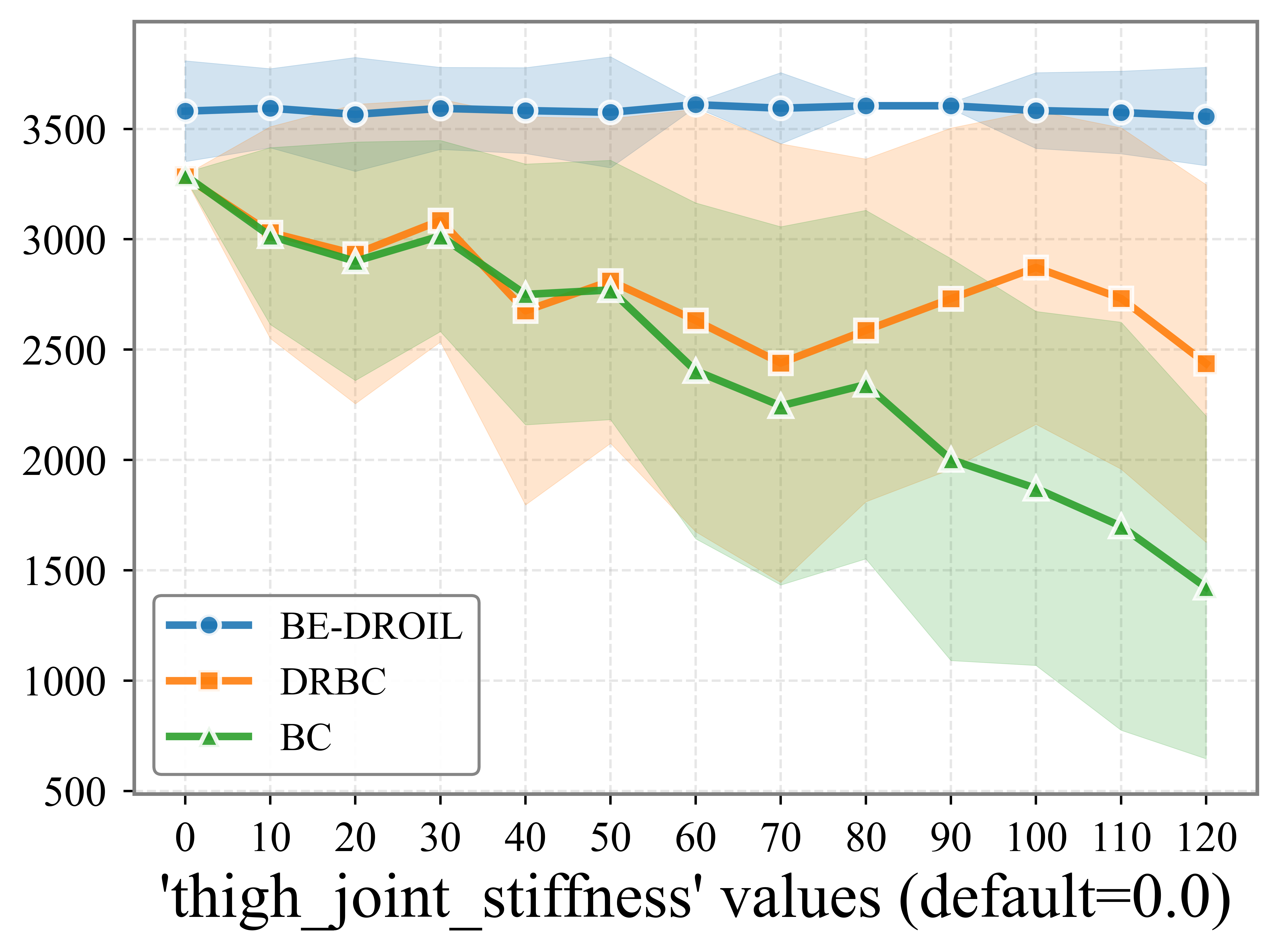}\hfill
    \includegraphics[width=0.33\columnwidth]{./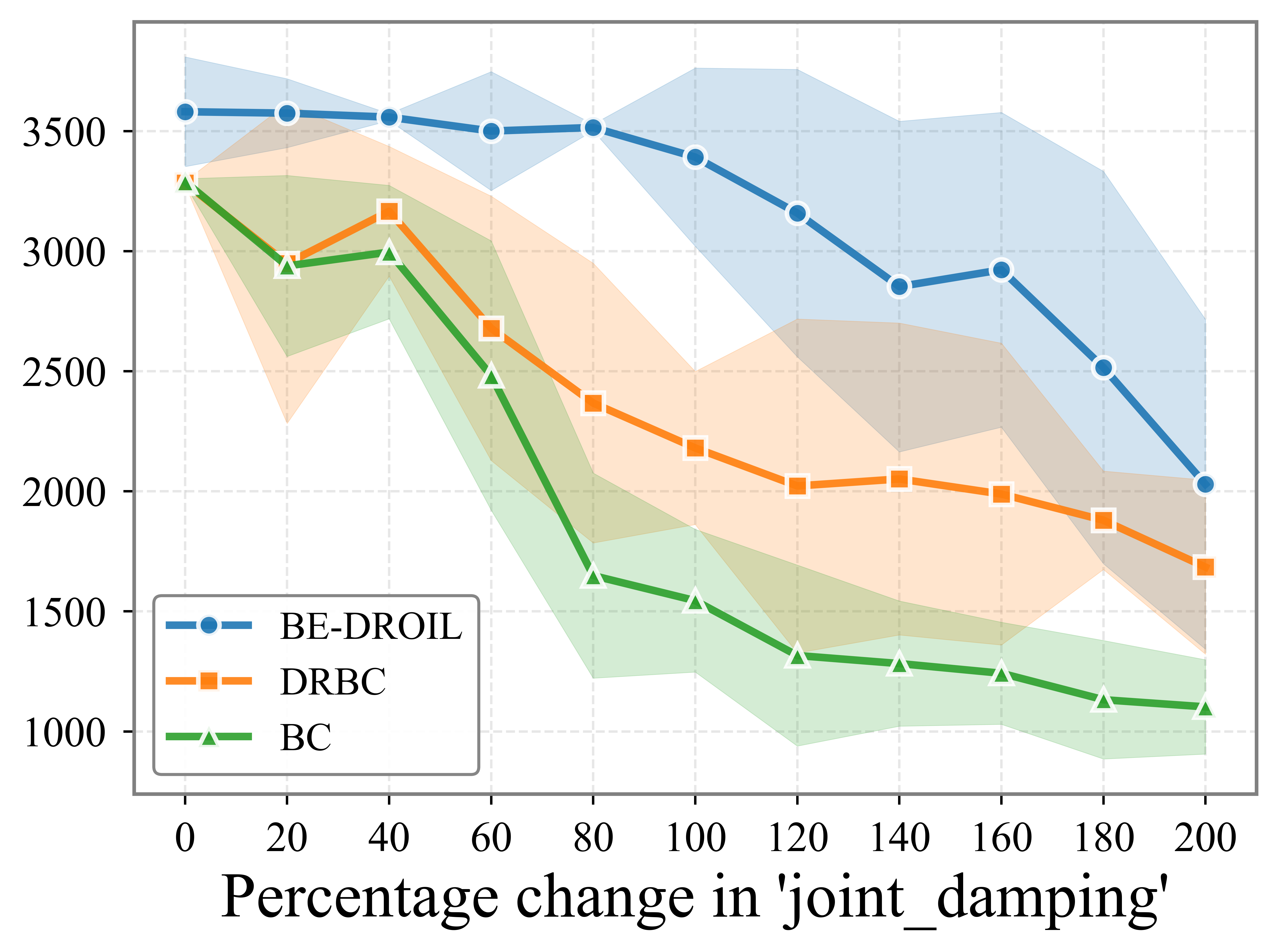}
\end{tcolorbox}

\vspace{0.1cm}  

\begin{tcolorbox}[
    width=\columnwidth,
    nobeforeafter, 
    coltitle=black, 
    fonttitle=\fontfamily{lmss}\selectfont\bfseries, 
    title=Ant,
    halign title=flush center, 
    colback=backg_blue!5,
    colframe=brown!25,
    boxrule=1.5pt,
    left=0pt, 
    right=0pt,
    top=0pt,
    bottom=0pt,
    toptitle=0mm,
    bottomtitle=0mm
]
    \centering
    \includegraphics[width=0.33\columnwidth]{./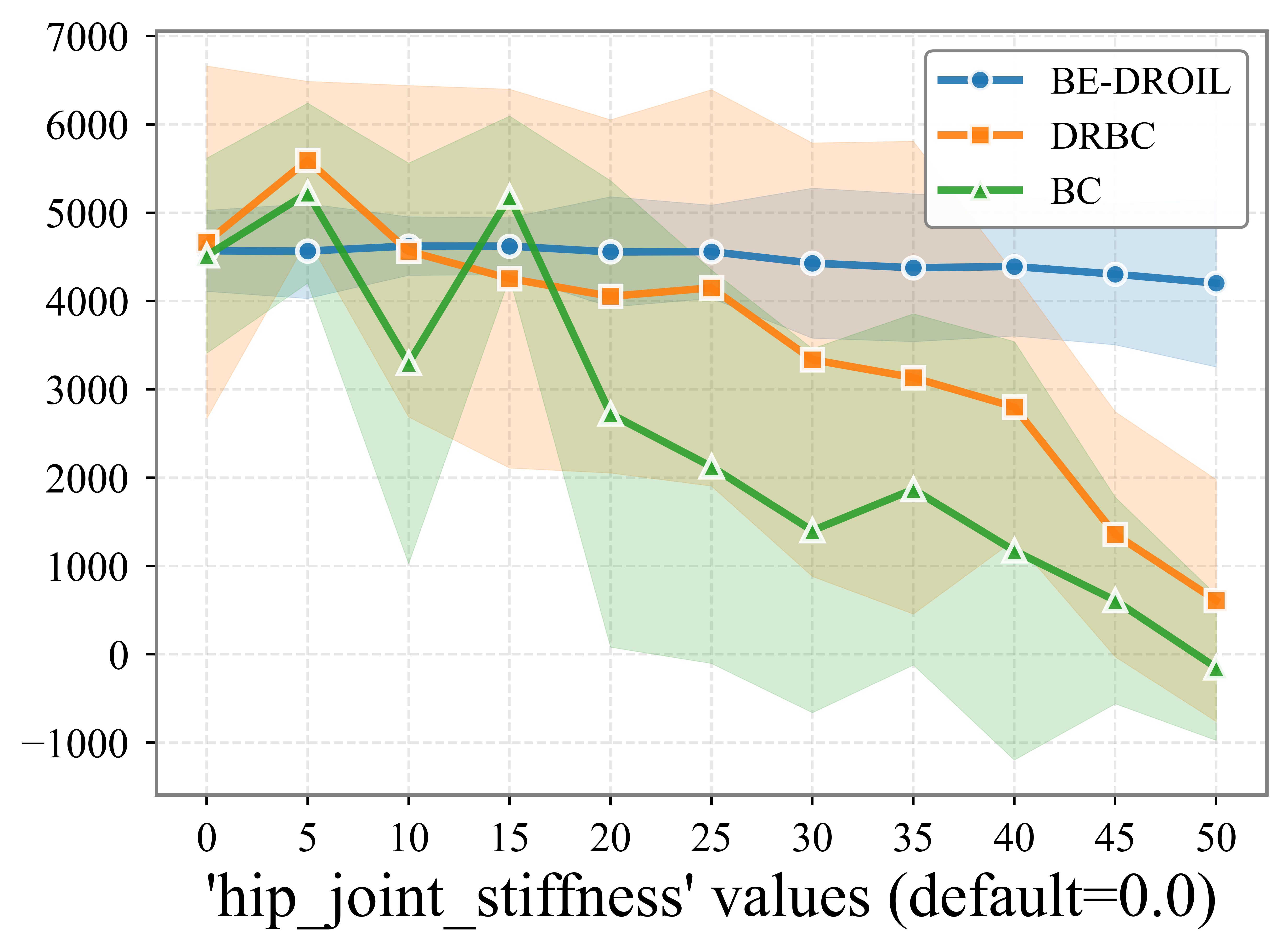}\hfill
    \includegraphics[width=0.33\columnwidth]{./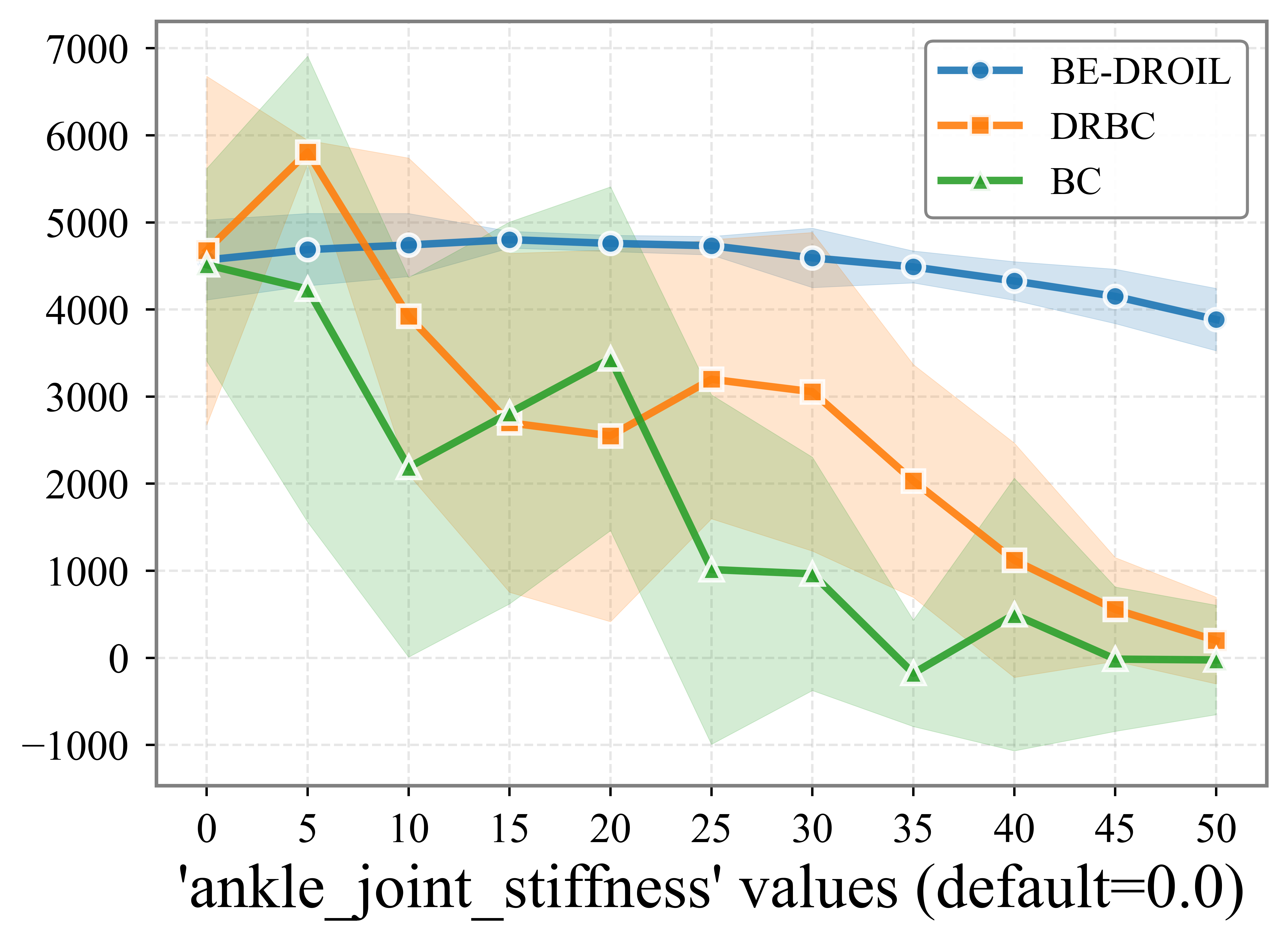}\hfill
    \includegraphics[width=0.33\columnwidth]{./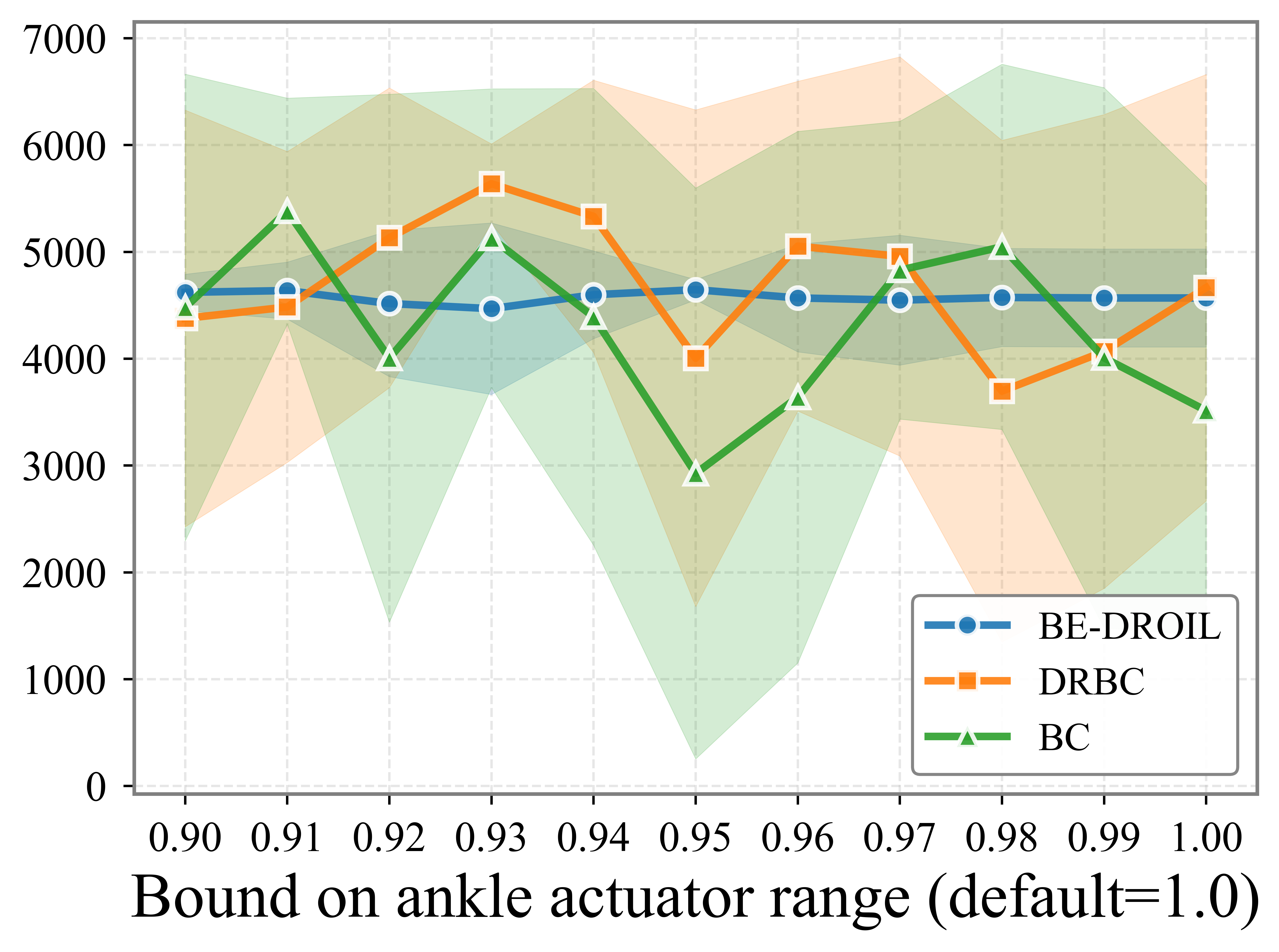}
\end{tcolorbox}
\vspace{-0.5cm}
\caption{Perturbation results for Hopper and Ant, with Y-axis denoting average cumulative reward.}
\label{fig:hopper_and_ant_perturbations}
\end{figure}

\section{Experimental Results}

\noindent\textbf{Setup.}
We evaluate the empirical performance of our proposed BE-DROIL algorithm on the MuJoCo locomotion suite \citep{todorov2012mujoco}, a standard benchmark for imitation learning in continuous-control domains with varying levels of difficulty. To construct the expert demonstration dataset, we follow the protocol of \citet{panaganti2023distributionally}, using expert trajectories generated by pre-trained TD3 \citep{fujimoto2018addressing} policies available in the RL Baselines3 Zoo repository \citep{rl-zoo3}. The number of samples per environment is also aligned with \citet{panaganti2023distributionally} to ensure fair comparison. Additional training details are provided in the Appendix \ref{appendix:experiments}.

\vspace{0.5em}
\noindent\textbf{Baseline Algorithms.}
We compare BE-DROIL against two baselines. Since fully offline robust imitation learning, where demonstrations originate from a single nominal environment, remains relatively unexplored, the only comparable prior method is Distributionally Robust Behavior Cloning (DRBC) \citep{panaganti2023distributionally}. In addition, we include standard Behavioral Cloning (BC) \citep{firstBC} as a non-robust baseline to highlight the importance of accounting for transition uncertainty when deployment dynamics differ from those seen during training.

\vspace{0.5em}
\noindent\textbf{Implementation.}
The term $(1-\gamma)\E_{s\sim\mu,,a\sim\pi_D(\cdot|s)}[Q(s,a)]$ in \eqref{label:final_optimization_problem} requires sampling from the initial-state distribution $\mu$.
Since expert datasets contain few unique initial states (e.g., only six in Walker2d), following standard DICE-based practice \citep{Kostrikov2020Imitation}, we treat every state within a trajectory as an effective initial state to ensure sufficient coverage for stable estimation.
Network architectures and remaining optimization details are provided in the Appendix \ref{appendix:experiments}.

\vspace{0.5em}
\noindent\textbf{Domains.} To evaluate robustness under transition shifts, we assess performance in perturbed test environments where key physical parameters are modified to induce model mismatch. Following the perturbation protocol of \citet{panaganti2023distributionally}, we vary parameters affecting torque generation, compliance, and energy dissipation, thereby capturing distinct modes of transition shift. Specifically, for Hopper, we perturb gravity, joint damping, and thigh-joint stiffness; for Ant, ankle- and hip-joint stiffness and actuator range; for Walker2d, gravity, actuator range, and foot-joint damping; and for HalfCheetah, back-joint stiffness, joint damping, and frictionloss.

\vspace{0.5em}
\noindent\textbf{Empirical Findings.}
Reported scores correspond to the mean and standard deviation of episodic returns over $100$ independently seeded rollouts.
Unlike DRBC, which tunes its hyperparameters (e.g., uncertainty radius and learning rate) separately for each environment, BE-DROIL uses a single set optimized on Ant and applies it unchanged across all others.
This design emphasizes generalization and fairness over environment-specific tuning.

\begin{figure}[t]
\centering
\begin{tcolorbox}[
    width=\columnwidth,
    nobeforeafter, 
    coltitle=black, 
    fonttitle=\fontfamily{lmss}\selectfont\bfseries, 
    title=Walker2d,
    halign title=flush center, 
    colback=backg_blue!5,
    colframe=brown!25,
    boxrule=1.5pt,
    left=0pt, 
    right=0pt,
    top=0pt,
    bottom=0pt,
    toptitle=0mm,
    bottomtitle=0mm
]
    \centering
    \includegraphics[width=0.33\columnwidth]{./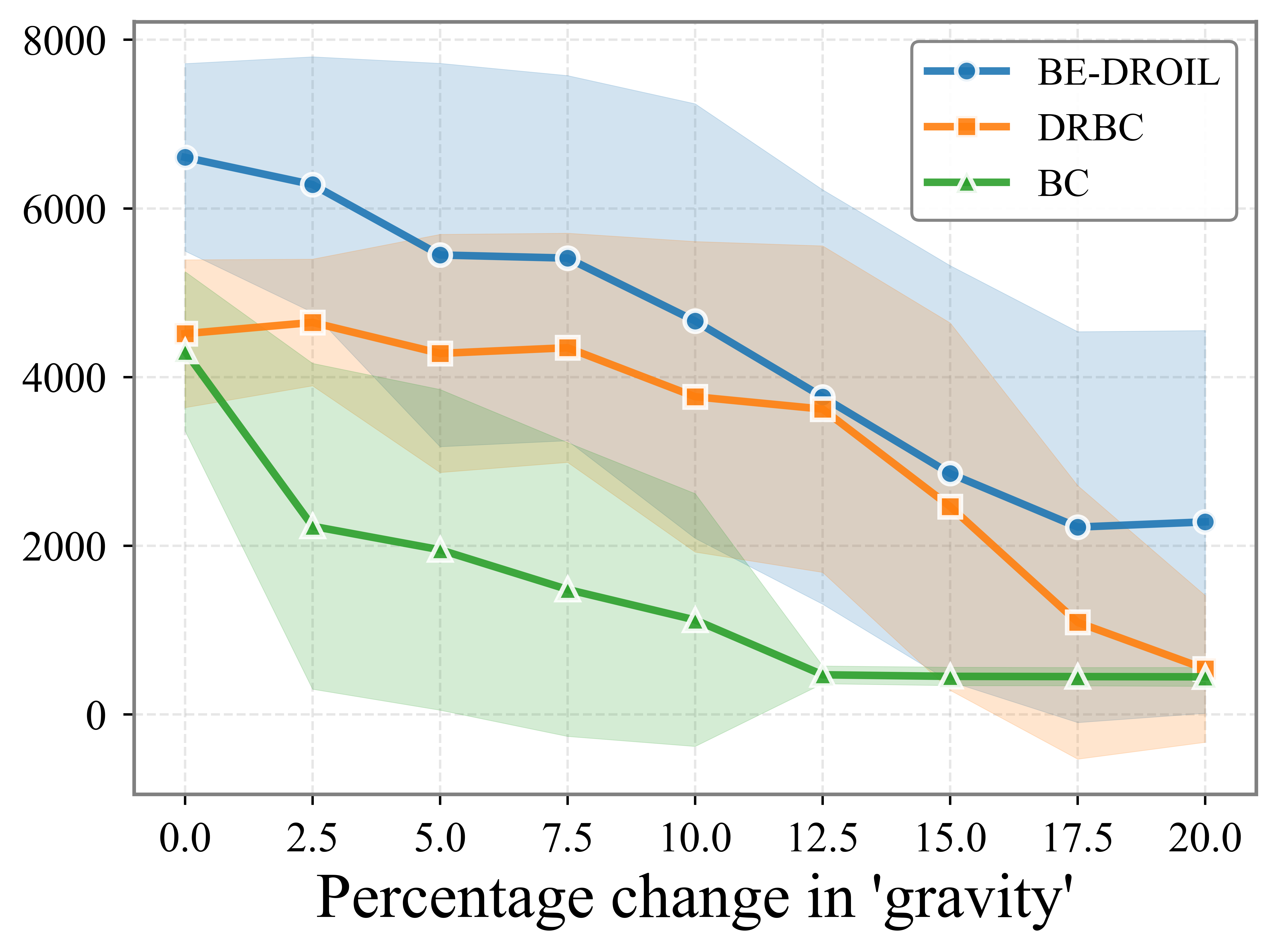}\hfill
    \includegraphics[width=0.33\columnwidth]{./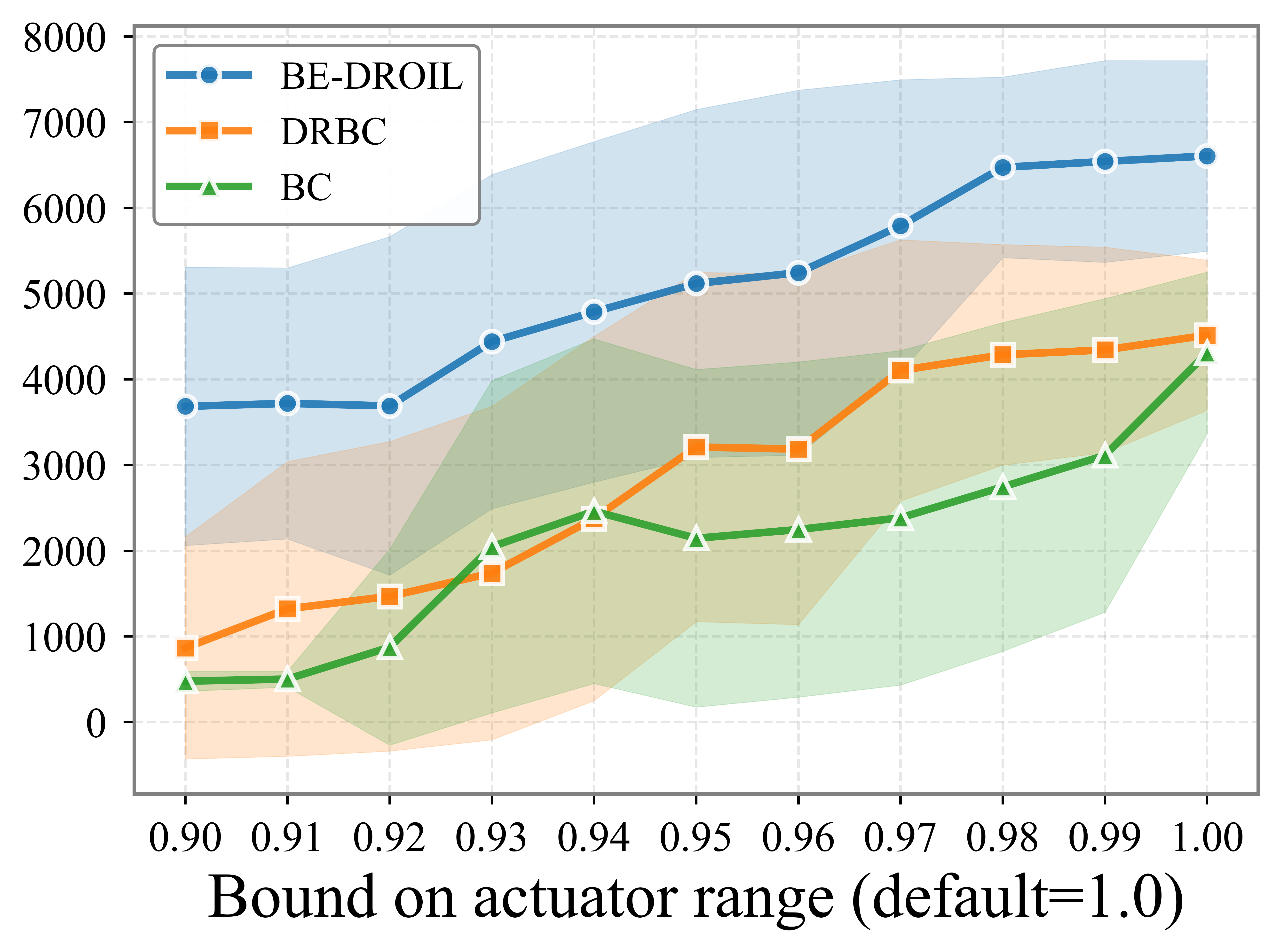}\hfill
    \includegraphics[width=0.33\columnwidth]{./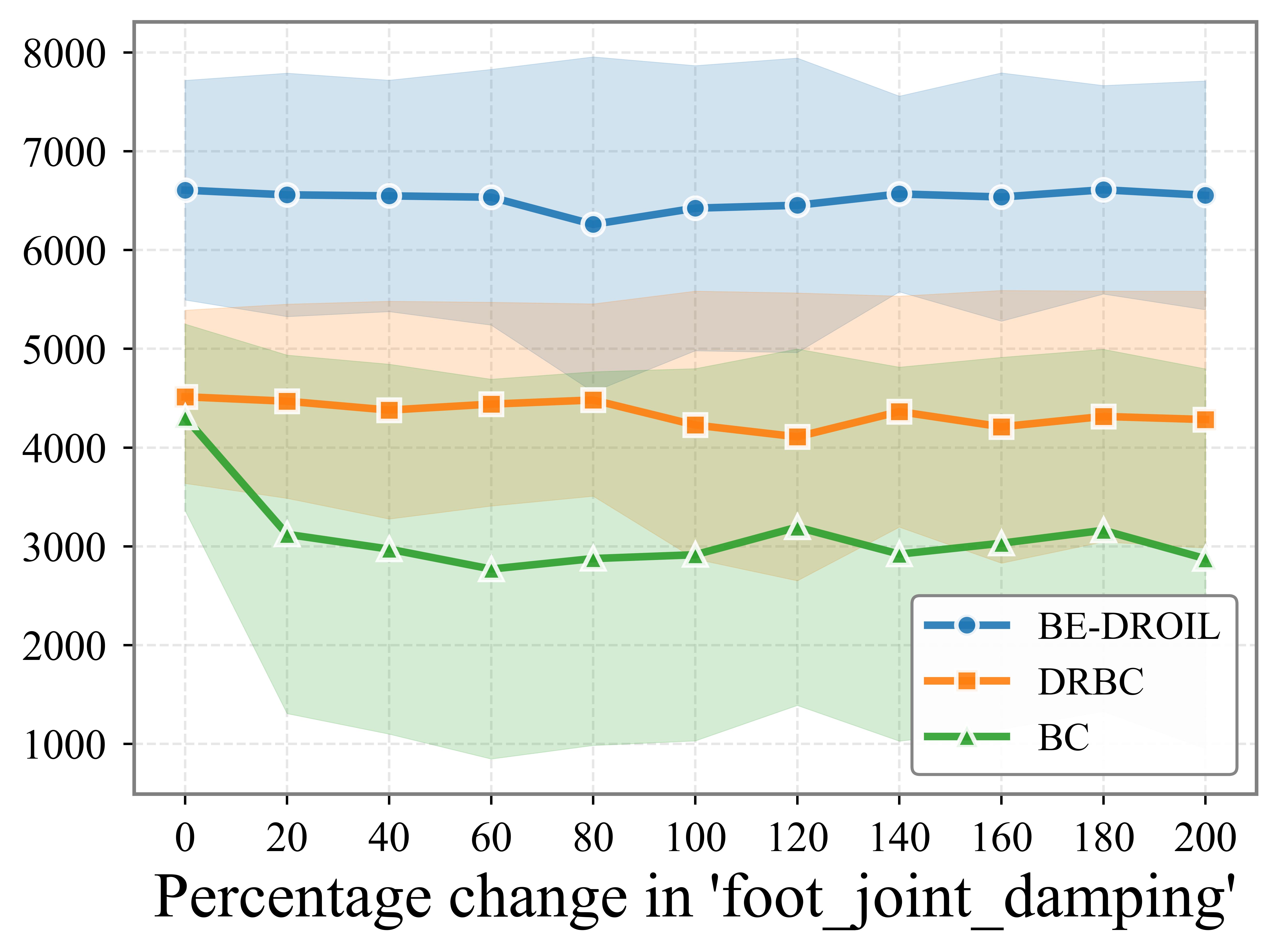}
\end{tcolorbox}

\vspace{0.1cm}  

\begin{tcolorbox}[
    width=\columnwidth,
    nobeforeafter, 
    coltitle=black, 
    fonttitle=\fontfamily{lmss}\selectfont\bfseries, 
    title=HalfCheetah,
    halign title=flush center, 
    colback=backg_blue!5,
    colframe=brown!25,
    boxrule=1.5pt,
    left=0pt, 
    right=0pt,
    top=0pt,
    bottom=0pt,
    toptitle=0mm,
    bottomtitle=0mm
]
    \centering
    \includegraphics[width=0.33\columnwidth]{./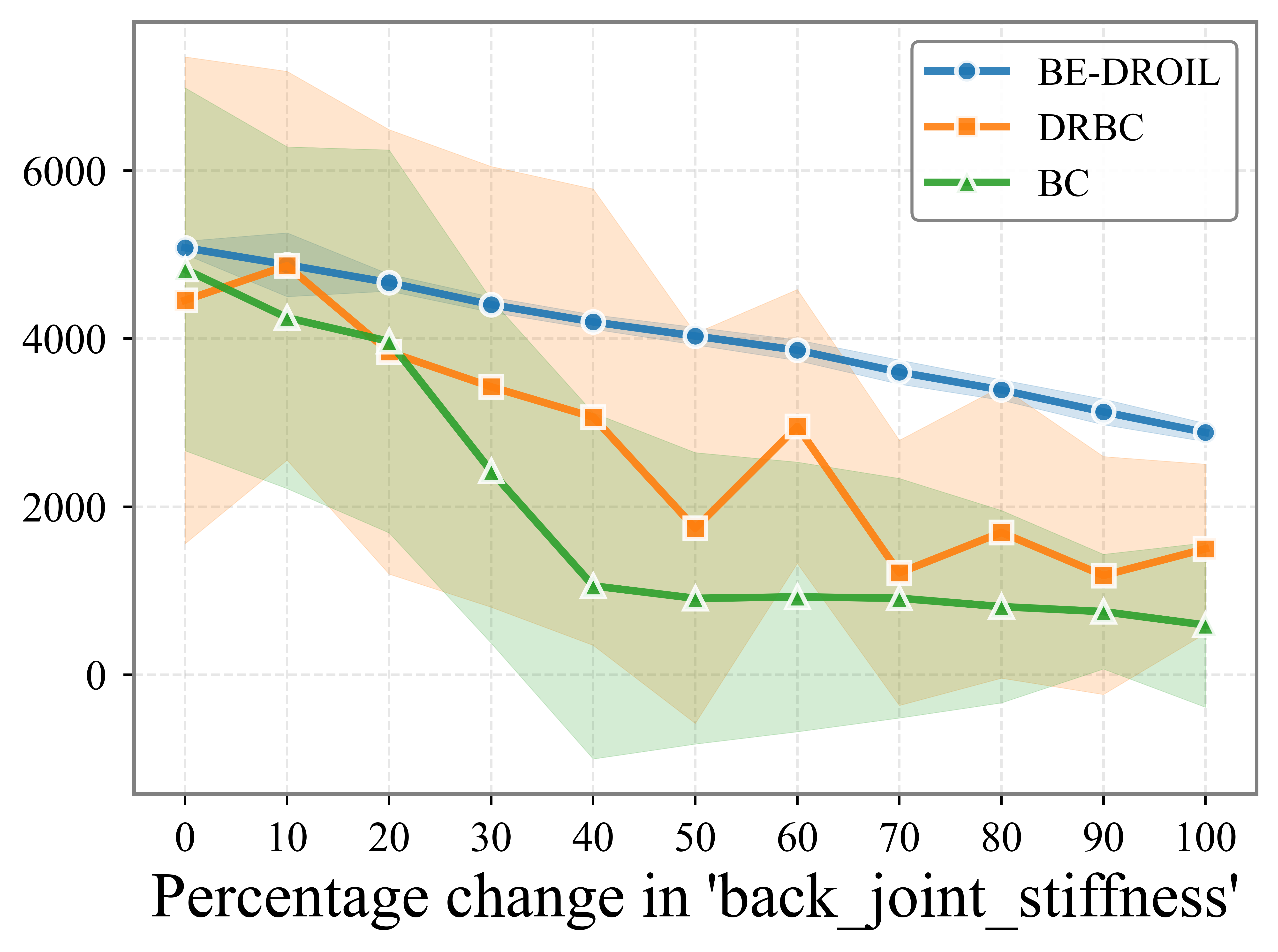}\hfill
    \includegraphics[width=0.33\columnwidth]{./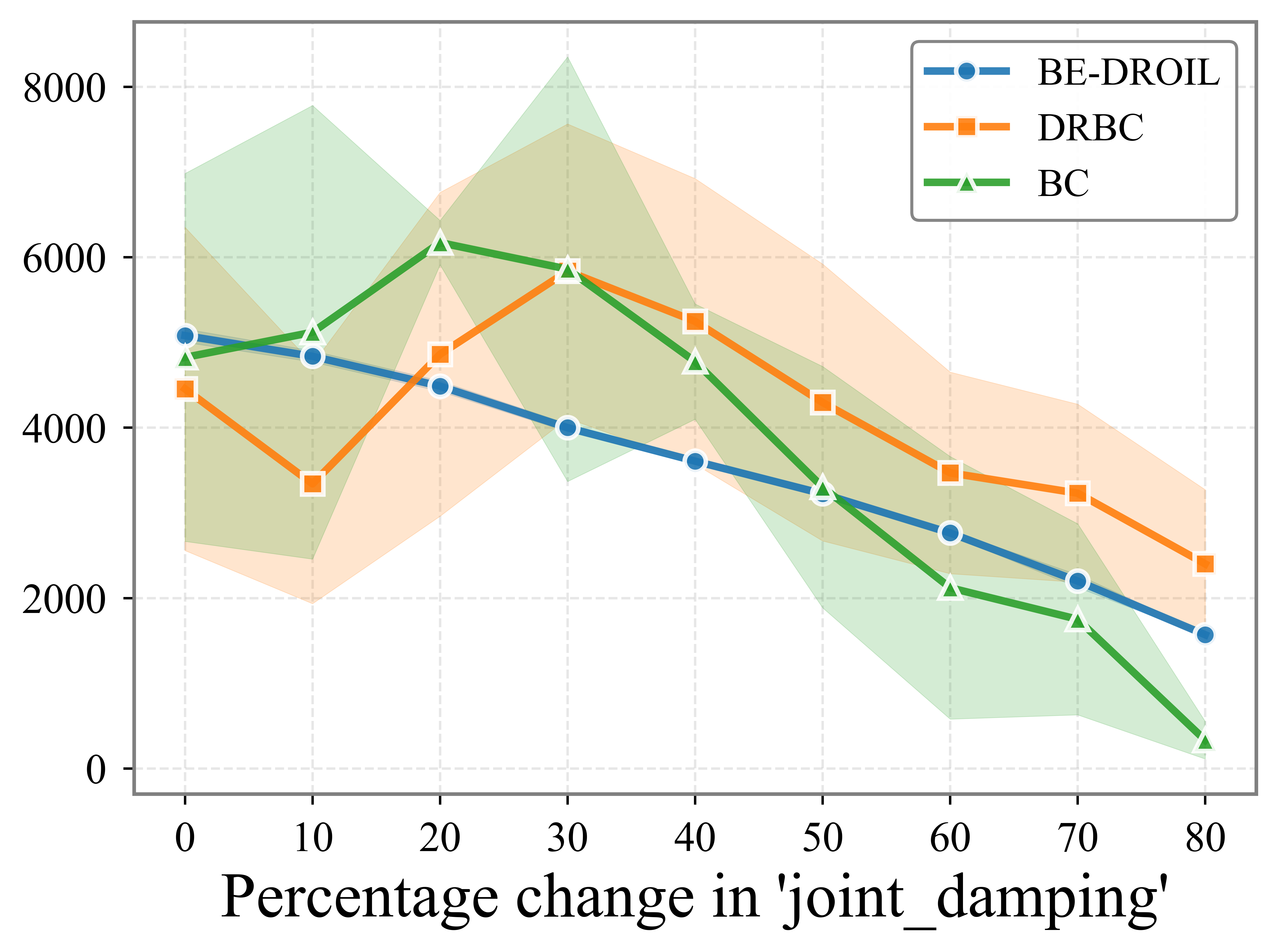}\hfill
    \includegraphics[width=0.33\columnwidth]{./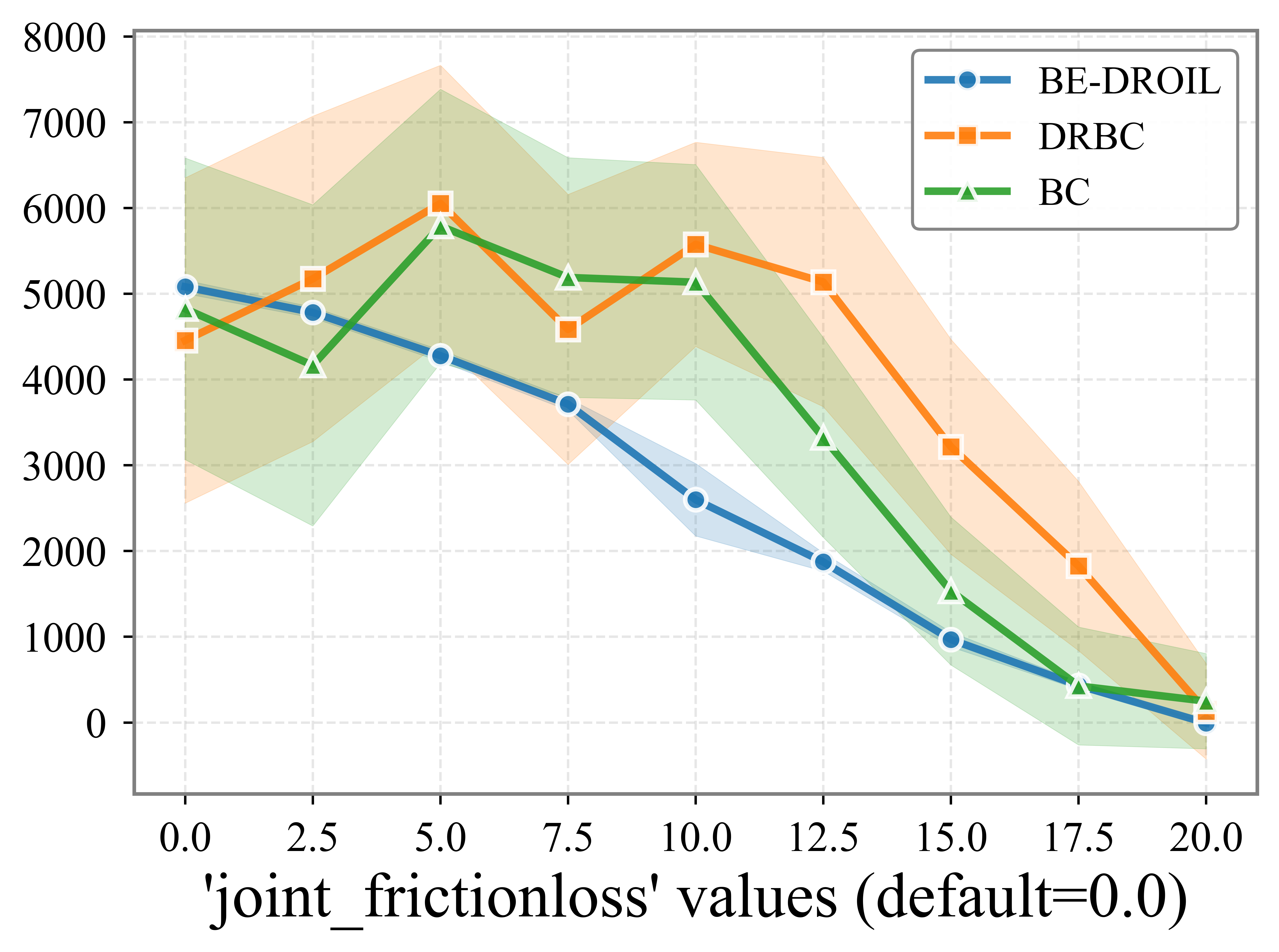}
\end{tcolorbox}
\vspace{-0.5cm}
\caption{Perturbation results for Walker2d and HalfCheetah, with Y-axis denoting average cumulative reward.}
\label{fig:walker_and_halfcheetah_perturbations}
\end{figure}

\vspace{0.5em}
\noindent\textbf{\textit{Hopper.}}
As gravity or damping increases, the agent encounters greater resistive forces and energy losses, making balance and propulsion harder to sustain. With higher stiffness, the leg becomes more rigid, reducing compliance and agility.
Figure~\ref{fig:hopper_and_ant_perturbations} shows that BC’s performance drops sharply, failing to remain upright even under moderate perturbations.
DRBC is more resilient but still declines steadily as shifts intensify. In contrast, BE-DROIL maintains high returns across all ranges, showing only gradual decay under extreme conditions while remaining consistently stable.

\vspace{0.5em}
\noindent\textbf{\textit{Ant.}}
When the actuator range is reduced, the limbs lose torque authority, making it harder for the agent to recover from posture disturbances involving multiple legs.
Figure~\ref{fig:hopper_and_ant_perturbations} shows that BC collapses, unable to maintain a coherent gait as torque and flexibility decrease.
DRBC declines moderately but retains basic stability across perturbations.
BE-DROIL, however, sustains nearly constant returns, preserving coordination and balance even when both actuation strength and joint stiffness are severely limited.

\vspace{0.5em}
\noindent\textbf{\textit{Walker2d.}}
Figure~\ref{fig:walker_and_halfcheetah_perturbations} shows that BC quickly loses stability as its legs fall out of sync, leading to frequent collapses.
DRBC preserves locomotion under moderate perturbations but gradually deteriorates as gravity and damping increase.
BE-DROIL remains the most stable across all conditions, maintaining coordinated walking and high returns with only mild decline even under strong environmental and control shifts.

\vspace{0.5em}
\noindent\textbf{\textit{HalfCheetah.}}
Figure~\ref{fig:walker_and_halfcheetah_perturbations} shows that BC’s performance drops sharply under these perturbations, while DRBC maintains greater stability.
BE-DROIL performs best under stiffness changes but exhibits steeper declines under increased damping and friction.
This gap likely stems from DRBC using environment-specific uncertainty radii and learning rates optimized for HalfCheetah, whereas BE-DROIL employs a fixed configuration across all domains.

\noindent Overall, BE-DROIL achieves state-of-the-art robustness on Ant, Hopper, and Walker2d, showing low variance and smaller degradation than baselines under actuator, stiffness, and gravity perturbations. Using a fixed hyperparameter configuration across all tasks ensures fair cross-domain evaluation but results in weaker performance under high damping and friction loss in HalfCheetah. Preliminary re-tuning indicates that minor adjustments to the learning rate and uncertainty radius largely close this gap, confirming BE-DROIL’s effectiveness when properly scaled. These results demonstrate principled distributional robustness under transition shifts, with strong generalization and consistent performance across domains without environment-specific tuning.

\section{Conclusion}

This work introduced BE-DROIL, a principled framework for \emph{distributionally robust imitation learning} under dynamics mismatch in the strictly offline setting, where only expert demonstrations from a single nominal environment are available. We formulated robust policy learning as a constrained minimax optimization over an $f$-divergence ambiguity set around the nominal transition kernel and derived an equivalent offline objective via a triplet occupancy formulation that eliminates explicit dependence on unknown dynamics. Using convex duality, we reduced the adversarial problem to a tractable importance-weighting scheme under nominal data, yielding an efficient alternating optimization algorithm. Empirically, BE-DROIL demonstrates consistent robustness gains over existing baselines across multiple continuous-control benchmarks with perturbed dynamics. Future work will extend this framework to non-stationary dynamics and large-scale imitation foundation models, as well as deriving analogues of Lemma~\ref{lemma:D_TV_mismatch_s_a_s_prime} for alternative divergence measures to broaden the class of admissible $f$-divergence generators in our optimization framework.

\section{Impact Statement}
This work introduces BE-DROIL, a framework for distributionally robust offline imitation learning that enhances policy reliability under dynamics shift. The method can reduce risky data collection in robotics or healthcare by learning entirely from offline demonstrations, but it may also propagate biases or unsafe behaviors present in expert data. Robustness guarantees are bounded by the chosen uncertainty set and may not hold under unmodeled or extreme shifts. The approach should therefore be deployed cautiously in real-world systems, with safety audits, drift monitoring, and human oversight. We will release code to ensure reproducibility while maintaining ethical and license compliance. Overall, BE-DROIL advances safer, more generalizable policy learning, provided users remain mindful of its assumptions and scope.

\medskip

\bibliographystyle{plainnat} 
\bibliography{references_neurips}

\newpage
\appendix

\section{Theoretical Derivations.}
\label{appendix:theory}

\begin{lemma}
For any policy $\pi$ and transition kernel $T \in \mathcal{T}(\rho')$, the following holds:
\begin{equation*}
    D_{\mathrm{TV}}(d^\pi_T(s), d^\pi_{T^o}(s)) \le \frac{\gamma \rho'}{1 - \gamma}.
\end{equation*}
\label{lemma:D_TV_mismatch_s}
\end{lemma}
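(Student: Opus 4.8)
The plan is to work directly with the time decomposition of the discounted state occupancy measure rather than deducing the bound from Lemma~\ref{lemma:D_TV_mismatch_s_a_s_prime} by marginalization. Marginalizing the triplet measure and invoking the data-processing inequality would only recover the weaker constant $\rho'/(1-\gamma)$; the sharper factor of $\gamma$ comes precisely from the fact that the time-$0$ state marginal equals $\mu$ regardless of the transition kernel. Concretely, I would write $d^{\pi}_{T}(s) = (1-\gamma)\sum_{t=0}^{\infty}\gamma^{t} p_t^{T}(s)$, where $p_t^{T}(s) = \Prob(s_t = s)$ under $\pi$ and $T$ with $s_0 \sim \mu$, and observe that $p_0^{T} = p_0^{T^o} = \mu$, so the $t=0$ term cancels in the difference $d^{\pi}_{T} - d^{\pi}_{T^o}$.

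First I would reduce the claim to a per-timestep estimate: by the triangle inequality applied to $d^{\pi}_{T} - d^{\pi}_{T^o} = (1-\gamma)\sum_{t\ge 1}\gamma^{t}(p_t^{T} - p_t^{T^o})$, we get $D_{\mathrm{TV}}(d^{\pi}_{T}, d^{\pi}_{T^o}) \le (1-\gamma)\sum_{t\ge 1}\gamma^{t}\, D_{\mathrm{TV}}(p_t^{T}, p_t^{T^o})$. Next I would bound the one-step state-transition kernels $P^{T}(s'|s) = \sum_a \pi(a|s) T(s'|s,a)$: by convexity of total variation in its arguments and the uncertainty-set constraint $D_{\mathrm{TV}}(T_{s,a}, T^o_{s,a}) \le \rho'$ from~\eqref{eqn:uncertainty_set}, one obtains $D_{\mathrm{TV}}(P^{T}(\cdot|s), P^{T^o}(\cdot|s)) \le \rho'$ uniformly in $s$. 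The heart of the argument is then a telescoping recursion for $D_{\mathrm{TV}}(p_t^{T}, p_t^{T^o})$: writing $p_t^{T} - p_t^{T^o} = (P^{T})^{\top}(p_{t-1}^{T} - p_{t-1}^{T^o}) + \big((P^{T})^{\top} - (P^{T^o})^{\top}\big)p_{t-1}^{T^o}$ and measuring in the $\tfrac12\ell_1$ norm, I would use that a stochastic operator is an $\ell_1$ non-expansion on signed measures to control the first term by $D_{\mathrm{TV}}(p_{t-1}^{T}, p_{t-1}^{T^o})$, and the uniform one-step bound together with $\sum_s p_{t-1}^{T^o}(s) = 1$ to control the second term by $\rho'$. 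This gives $D_{\mathrm{TV}}(p_t^{T}, p_t^{T^o}) \le D_{\mathrm{TV}}(p_{t-1}^{T}, p_{t-1}^{T^o}) + \rho'$, and since $D_{\mathrm{TV}}(p_0^{T}, p_0^{T^o}) = 0$, unrolling yields $D_{\mathrm{TV}}(p_t^{T}, p_t^{T^o}) \le t\rho'$.

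Finally I would substitute this linear-in-$t$ estimate into the discounted sum and evaluate the series $\sum_{t=1}^{\infty} t\gamma^{t} = \gamma/(1-\gamma)^2$, obtaining $D_{\mathrm{TV}}(d^{\pi}_{T}, d^{\pi}_{T^o}) \le (1-\gamma)\rho'\cdot \gamma/(1-\gamma)^2 = \gamma\rho'/(1-\gamma)$, as claimed. I expect the telescoping recursion to be the main obstacle: the care lies in splitting the one-step difference correctly into a propagated-error term and a freshly-injected-error term, and in rigorously justifying the $\ell_1$ non-expansion of the stochastic transition operator acting on a signed measure. The cancellation of the $t=0$ term, which is what upgrades the constant from $\rho'/(1-\gamma)$ to $\gamma\rho'/(1-\gamma)$, should be flagged explicitly since it is the structural feature distinguishing this lemma from Lemma~\ref{lemma:D_TV_mismatch_s_a_s_prime}.
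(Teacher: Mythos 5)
Your proposal is correct, and it is genuinely different from what the paper does: the paper offers no in-house proof of this lemma at all, deferring entirely to Lemma~7 of \citet{panaganti2023distributionally}, so yours is the only self-contained argument on the table. Every step of your plan checks out: the induced state kernel $P^{T}_{\pi}(s'\mid s)=\sum_a \pi(a\mid s)\,T(s'\mid s,a)$ satisfies $D_{\mathrm{TV}}\bigl(P^{T}_{\pi}(\cdot\mid s),P^{T^o}_{\pi}(\cdot\mid s)\bigr)\le\rho'$ by averaging the constraint in \eqref{eqn:uncertainty_set} over $\pi(\cdot\mid s)$; the split $p_t^{T}-p_t^{T^o}=(P^{T})^{\top}(p_{t-1}^{T}-p_{t-1}^{T^o})+\bigl((P^{T})^{\top}-(P^{T^o})^{\top}\bigr)p_{t-1}^{T^o}$ together with the $\ell_1$ non-expansiveness of stochastic kernels yields the recursion $D_{\mathrm{TV}}(p_t^{T},p_t^{T^o})\le D_{\mathrm{TV}}(p_{t-1}^{T},p_{t-1}^{T^o})+\rho'$, hence $D_{\mathrm{TV}}(p_t^{T},p_t^{T^o})\le t\rho'$ from $p_0^{T}=p_0^{T^o}=\mu$; and $(1-\gamma)\rho'\sum_{t\ge 1}t\gamma^{t}=\gamma\rho'/(1-\gamma)$ closes the bound. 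The route standardly used for the cited result instead works in operator form, via the resolvent identity $d^{\pi}_{T}-d^{\pi}_{T^o}=\gamma\,d^{\pi}_{T}\,(P^{T}_{\pi}-P^{T^o}_{\pi})\,(I-\gamma P^{T^o}_{\pi})^{-1}$ and the bound $\|(I-\gamma P)^{-1}\|_{1\to 1}\le 1/(1-\gamma)$; there the factor $\gamma$ appears explicitly in the identity, playing exactly the role your $t=0$ cancellation plays in the time domain. The two are essentially dual presentations of the same estimate: the resolvent version is shorter once the identity is in hand, while yours is more elementary (geometric series plus non-expansion, no operator inverses) and makes the origin of the $\gamma$ transparent. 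One remark in your favor that you understate: deducing this lemma from Lemma~\ref{lemma:D_TV_mismatch_s_a_s_prime} is not merely lossy (it would only give $\rho'/(1-\gamma)$) --- within this paper it would also be circular, since Lemma~\ref{lemma:D_TV_mismatch_s_a_s_prime} is itself proved from the present lemma through Lemma~\ref{lemma:D_TV_mismatch_s_a}.
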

\begin{proof}
See Lemma~7 in \citet{panaganti2023distributionally} for a detailed analysis.
\end{proof}

\begin{lemma}
For any policy $\pi$ and transition kernel $T \in \mathcal{T}(\rho')$, the following holds:
\begin{equation*}
    D_{\mathrm{TV}}(d^\pi_T(s,a), d^\pi_{T^o}(s,a)) \le \frac{\gamma \rho'}{1 - \gamma}.
\end{equation*}
\label{lemma:D_TV_mismatch_s_a}
\end{lemma}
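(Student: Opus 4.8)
The plan is to reduce the state--action claim to the state-only bound already established in Lemma~\ref{lemma:D_TV_mismatch_s}. The crucial observation is that the action at any state is drawn from the policy $\pi(\cdot|s)$, which does not depend on the transition kernel; hence the state--action occupancy factorizes as $d^\pi_T(s,a) = d^\pi_T(s)\,\pi(a|s)$ under both $T$ and $T^o$, directly from the definition of the discounted occupancy measure. This factorization is the entire engine of the proof.

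First I would write out the definition of total variation,
\[
D_{\mathrm{TV}}(d^\pi_T(s,a),\, d^\pi_{T^o}(s,a)) = \tfrac{1}{2}\sum_{s,a}\big|d^\pi_T(s,a) - d^\pi_{T^o}(s,a)\big|,
\]
and substitute the factorization so that $\pi(a|s)$ appears as a common nonnegative factor:
\[
= \tfrac{1}{2}\sum_{s,a}\pi(a|s)\,\big|d^\pi_T(s) - d^\pi_{T^o}(s)\big|.
\]
Next I would interchange the order of summation and use $\sum_a \pi(a|s)=1$ to collapse the inner sum over actions, obtaining exactly $\tfrac{1}{2}\sum_s \big|d^\pi_T(s) - d^\pi_{T^o}(s)\big| = D_{\mathrm{TV}}(d^\pi_T(s),\, d^\pi_{T^o}(s))$. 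In words, marginalizing out an action-conditional that is \emph{identical} under both kernels preserves the total variation distance exactly (a tight instance of the data-processing inequality). Finally I would invoke Lemma~\ref{lemma:D_TV_mismatch_s} to bound the right-hand side by $\frac{\gamma\rho'}{1-\gamma}$, completing the argument.

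There is no genuine technical obstacle here; the only point requiring care is justifying the factorization $d^\pi_T(s,a) = d^\pi_T(s)\,\pi(a|s)$ and emphasizing that it hinges on $\pi$ being independent of $T$. Were the action-conditional to differ between the two kernels, this clean reduction would collapse and one would instead need a triangle-inequality or coupling argument; here, however, equality holds and the bound transfers verbatim from the state-only lemma.
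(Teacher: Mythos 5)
Your proof is correct and follows essentially the same route as the paper's: factor $d^\pi_T(s,a) = \pi(a|s)\,d^\pi_T(s)$, pull $\pi(a|s)$ out of the total-variation sum, marginalize over actions via $\sum_a \pi(a|s) = 1$ to recover $D_{\mathrm{TV}}(d^\pi_T(s), d^\pi_{T^o}(s))$ exactly, and then apply Lemma~\ref{lemma:D_TV_mismatch_s}. Your added remarks on why the factorization holds (the policy's independence of the kernel) and the data-processing interpretation are sound clarifications of the same argument.
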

\begin{proof}
We build upon the Lemma \ref{lemma:D_TV_mismatch_s} that quantifies how uncertainty in the transition model influences the induced occupancy measure on the state-space for a fixed policy. 

Since $d^\pi_T(s,a) = \pi(a|s) d^\pi_T(s)$, it follows that
\begin{align*}
D_{\mathrm{TV}}(d^\pi_T(s,a), d^\pi_{T^o}(s,a))
&= \frac{1}{2}\sum_{s,a} |d^\pi_T(s,a) - d^\pi_{T^o}(s,a)| \\
&= \frac{1}{2}\sum_{s,a} \pi(a|s) |d^\pi_T(s) - d^\pi_{T^o}(s)| \\
&= D_{\mathrm{TV}}(d^\pi_T(s), d^\pi_{T^o}(s)) \\
&\le \frac{\gamma \rho'}{1 - \gamma}.
\end{align*}
\end{proof}

Having computed the occupancy measure total-variation distance over state–action pairs for a fixed policy under two distinct transition kernels, we now restate Lemma~\ref{lemma:D_TV_mismatch_s_a_s_prime} and provide its proof.

\begin{customlem}{1}
Consider any policy $\pi$ and $T \in \mathcal{T}(\rho')$. Then,
\[
D_{\mathrm{TV}}\big(d^\pi_T(s,a,s'),\,d^\pi_{T^o}(s,a,s')\big) \le \frac{\rho'}{1 - \gamma}.
\]
\end{customlem}

\begin{proof}
We have
\[
d^\pi_T(s,a,s') = d^\pi_T(s,a) T_{s,a}(s'), \quad
d^\pi_{T^o}(s,a,s') = d^\pi_{T^o}(s,a) T^o_{s,a}(s').
\]
Then,
\begin{align*}
&\| d^\pi_T(s,a,s') - d^\pi_{T^o}(s,a,s') \|_1 \\
&= \sum_{s,a,s'} \left| d^\pi_T(s,a) T_{s,a}(s') - d^\pi_{T^o}(s,a) T^o_{s,a}(s') \right| \\
&\le \sum_{s,a,s'} d^\pi_T(s,a) \left| T_{s,a}(s') - T^o_{s,a}(s') \right|
   + \sum_{s,a,s'} \left| d^\pi_T(s,a) - d^\pi_{T^o}(s,a) \right| T^o_{s,a}(s').
\end{align*}

For the first term, using $\|T_{s,a} - T^o_{s,a}\|_1 = 2 D_{\mathrm{TV}}(T_{s,a}, T^o_{s,a}) \le 2\rho'$,
\[
\sum_{s,a,s'} d^\pi_T(s,a) \left| T_{s,a}(s') - T^o_{s,a}(s') \right|
= \sum_{s,a} d^\pi_T(s,a) \|T_{s,a} - T^o_{s,a}\|_1
\le 2\rho'.
\]

For the second term, since $\sum_{s'} T^o_{s,a}(s') = 1$, we have
\begin{equation*}
\begin{aligned}
\sum_{s,a,s'} \left| d^\pi_T(s,a) - d^\pi_{T^o}(s,a) \right| T^o_{s,a}(s')
&= \sum_{s,a} \left| d^\pi_T(s,a) - d^\pi_{T^o}(s,a) \right|\\
&= \| d^\pi_T(s,a) - d^\pi_{T^o}(s,a) \|_1.
\end{aligned}
\end{equation*}
By lemma \ref{lemma:D_TV_mismatch_s_a},
\[
D_{\mathrm{TV}}(d^\pi_T(s,a), d^\pi_{T^o}(s,a)) \le \frac{\gamma \rho'}{1 - \gamma},
\]
which implies
\[
\| d^\pi_T(s,a) - d^\pi_{T^o}(s,a) \|_1 \le \frac{2\gamma\rho'}{1 - \gamma}.
\]

Combining both terms,
\[
\| d^\pi_T(s,a,s') - d^\pi_{T^o}(s,a,s') \|_1
\le 2\rho' + \frac{2\gamma\rho'}{1 - \gamma}
= \frac{2\rho'}{1 - \gamma}.
\]

Hence,
\[
D_{\mathrm{TV}}\big(d^\pi_T(s,a,s'), d^\pi_{T^o}(s,a,s')\big)
= \frac{1}{2} \| d^\pi_T(s,a,s') - d^\pi_{T^o}(s,a,s') \|_1
\le \frac{\rho'}{1 - \gamma}.
\]
\end{proof}

\begin{customprop}{1}
When $\tau > 0$, for the inner maximization subproblem in \eqref{eqn:final_lagrangian_constrained_bc},
\begin{align*}
w^{\star}_{Q,\tau,\pi}(s,a,s') 
&:= \arg\max_{w \ge 0} 
\Bigl\{(1-\gamma)\,\E_{s\sim\mu,\,a\sim\pi_D(\cdot|s)}[Q(s,a)] + \rho\tau \\
&\quad + \E_{s,a,s'\sim d^{\pi_D}_{T^o}}\!\big[-\tau f(w(s,a,s')) + w(s,a,s')\,e_{Q,\pi}(s,a,s')\big]\Bigr\},
\end{align*}
the optimizer admits the closed-form expression
\[
w^{\star}_{Q,\tau,\pi}(s,a,s')
= \max\!\left(0,\,(f')^{-1}\!\left(\frac{e_{Q,\pi}(s,a,s')}{\tau}\right)\right),
\quad \forall (s,a,s'),
\]
where $(f')^{-1}$ denotes the inverse mapping of $f'$, which exists and is strictly increasing because $f$ is strictly convex. For $\tau = 0$, $w^{\star}_{Q,\tau,\pi}(s,a,s') = +\infty$ if $e_{Q,\pi}(s,a,s')>0$ and $0$ otherwise.
\end{customprop}
\begin{proof}
We begin with the case when $\tau > 0$. Define the objective functional
\begin{align*}
\mathcal{L}(Q,\tau,w)
&:= (1-\gamma)\E_{s\sim\mu,\,a\sim\pi_D(\cdot|s)}[Q(s,a)] + \rho\tau \\
&\quad + \E_{s,a,s'\sim d^{\pi_D}_{T^o}}\!\big[-\tau f(w(s,a,s')) + w(s,a,s')\,e_{Q,\pi}(s,a,s')\big].
\end{align*}
For fixed $(Q,\tau)$, the maximization $\max_{w\ge 0}\mathcal{L}(Q,\tau,w)$ can be viewed as a constrained optimization problem over $w$. 
Since the first two terms in $\mathcal{L}(Q,\tau,w)$ do not depend on $w$, the optimization effectively reduces to maximizing only the expectation term involving $w$. 
That is,
\begin{equation*}
\begin{aligned}
&\max_{w \geq 0} \quad \E_{s,a,s'\sim d^{\pi_D}_{T^o}}\!\big[-\tau f(w(s,a,s')) + w(s,a,s')\,e_{Q,\pi}(s,a,s')\big]
\\
\Leftrightarrow\;&\max_{w \geq 0} \quad \sum_{s,a,s'} d^{\pi_D}_{T^o}(s,a,s')\!\big[-\tau f(w(s,a,s')) + w(s,a,s')\,e_{Q,\pi}(s,a,s')\big].
\end{aligned}
\end{equation*}
Each term in the summation depends only on its corresponding local variable $w(s,a,s')$, 
which implies that the overall maximization can be solved independently for each $(s,a,s')$. 
Accordingly, for every $(s,a,s')$ with $d^{\pi_D}_{T^o}(s,a,s')>0$, we consider the scalar subproblem:
\begin{equation*}
\begin{aligned}
\max_{w(s,a,s')} \quad & -\,\tau f(w(s,a,s')) + e_{Q,\pi}(s,a,s')\,w(s,a,s'), \\
\text{s.t.}\quad & -\,w(s,a,s') \le 0.
\end{aligned}
\end{equation*}
Introducing the Lagrange multiplier $\kappa(s,a,s') \ge 0$ for the constraint $-w(s,a,s') \le 0$ yields the Lagrangian
\[
\mathcal{J}(w,\kappa)
= -\,\tau f(w(s,a,s')) + e_{Q,\pi}(s,a,s')\,w(s,a,s') - \kappa(s,a,s')\,(-w(s,a,s')).
\]
Simplifying,
\[
\mathcal{J}(w,\kappa)
= -\,\tau f(w(s,a,s')) + \big(e_{Q,\pi}(s,a,s') + \kappa(s,a,s')\big)\,w(s,a,s').
\]

\noindent
Because $f$ is strictly-convex and the feasible set $\{w\ge0\}$ is convex, the objective is concave in $w$, and thus strong duality holds. Consequently, the Karush--Kuhn--Tucker (KKT) conditions are both necessary and sufficient for optimality. We now state the \textbf{KKT} conditions corresponding to the above Lagrangian formulation.
\begin{itemize}[leftmargin=*]
    \item \textbf{Primal feasibility:} $w^*(s,a,s') \ge 0$.
    \item \textbf{Dual feasibility:} $\kappa^*(s,a,s') \ge 0$.
    \item \textbf{Complementary slackness:} $\kappa^*(s,a,s')\,w^*(s,a,s')=0$.
    \item \textbf{Stationarity:} The first-order condition is
    \begin{equation*}
        -\tau f'(w^*(s,a,s')) 
        + e_{Q,\pi}(s,a,s') + \kappa^*(s,a,s')=0.
    \end{equation*}
\end{itemize}

We now construct a candidate pair $(w^\star_{Q,\tau,\pi}, \kappa^\star)$ and show that it satisfies all KKT conditions:
\begin{equation}
\begin{aligned}
w^{\star}_{Q,\tau,\pi}(s,a,s')
&= \max\!\left(0,\,(f')^{-1}\!\left(\frac{e_{Q,\pi}(s,a,s')}{\tau}\right)\right), \\[4pt]
\kappa^{\star}(s,a,s') &=
\begin{cases}
0, & \text{if } (f')^{-1}\!\left(\frac{e_{Q,\pi}(s,a,s')}{\tau}\right) > 0, \\[6pt]
\tau f'(0) - e_{Q,\pi}(s,a,s'), & \text{otherwise.}
\end{cases}
\end{aligned}
\label{eqn:optimal_w_and_kappa}
\end{equation}

\paragraph{Verification of KKT conditions.}
By construction, $w^{\star}_{Q,\tau,\pi}(s,a,s') \ge 0$, ensuring primal feasibility. 
We now verify the remaining KKT conditions by considering two complementary cases.

\begin{itemize}
    \item \textbf{Case (1):} $(f')^{-1}\!\left(\tfrac{e_{Q,\pi}(s,a,s')}{\tau}\right) > 0$.  
    In this case, $w^{\star}_{Q,\tau,\pi}(s,a,s') = (f')^{-1}\!\left(\tfrac{e_{Q,\pi}(s,a,s')}{\tau}\right)$ and $\kappa^{\star}(s,a,s')=0$. It is straightforward to verify that $ w^{\star}_{Q,\tau,\pi}(s,a,s'), \kappa^{\star}(s,a,s')$  satisfy the KKT conditions in this case.

    \item \textbf{Case (2):} $(f')^{-1}\!\left(\tfrac{e_{Q,\pi}(s,a,s')}{\tau}\right) \leq 0$. In this case, $w^{\star}_{Q,\tau,\pi}(s,a,s') = 0$ and $\kappa^{\star}(s,a,s') = \tau f'(0) - e_{Q,\pi}(s,a,s')$. It is straightforward to verify that primal feasibility, complementary slackness and stationarity conditions are satisfied in this case. To verify dual feasibility, note that $(f')^{-1}\!\left(\tfrac{e_{Q,\pi}(s,a,s')}{\tau}\right) \leq 0$ implies that $\left(\tfrac{e_{Q,\pi}(s,a,s')}{\tau}\right) \leq f'(0)$ and hence $\kappa^{\star}(s,a,s') = \tau f'(0) - e_{Q,\pi}(s,a,s') \ge 0$. 
\end{itemize}

Therefore, the pair $(w^{\star}_{Q,\tau,\pi}, \kappa^{\star})$ defined in \eqref{eqn:optimal_w_and_kappa} satisfies all KKT conditions and is thus optimal. For those $(s,a,s')$ where $d^{\pi_D}_{T^o}(s,a,s') = 0$, the corresponding $w(s,a,s')$ values do not affect the objective and can therefore be chosen arbitrarily. For consistency, we define them in the same manner as for the tuples $(s,a,s')$ with $d^{\pi_D}_{T^o}(s,a,s') > 0$. Consequently, the closed-form optimal solution is
\[
w^{\star}_{Q,\tau,\pi}(s,a,s')
= \max\!\left(0,\,(f')^{-1}\!\left(\frac{e_{Q,\pi}(s,a,s')}{\tau}\right)\right),
\quad \forall (s,a,s').
\]

When $\tau = 0$, 
\begin{align*}
\mathcal{L}(Q,\tau,w)
&:= (1-\gamma)\E_{s\sim\mu,\,a\sim\pi_D(\cdot|s)}[Q(s,a)] + \E_{s,a,s'\sim d^{\pi_D}_{T^o}}\!\big[w(s,a,s')\,e_{Q,\pi}(s,a,s')\big].
\end{align*}

It is straightforward to see that maximizing this with respect to $w \geq 0$ gives

\[
w^{\star}_{Q,\tau,\pi}(s,a,s') =
\begin{cases}
+\infty, & \text{if } e_{Q,\pi}(s,a,s') > 0,\\[4pt]
0, & \text{if } e_{Q,\pi}(s,a,s') < 0,\\[4pt]
\text{arbitrary}, & \text{if } e_{Q,\pi}(s,a,s') = 0.
\end{cases}
\]

We consider the arbitrary value as $0$ to complete the proof.
\end{proof}

\begin{lemma}[Monotonicity of $f$-divergence]
Let $P$ and $Q$ be probability distributions on $\mathcal{X}$ with $P \ll Q$. If two generator functions $f$ and $g$ satisfy $f(x)\le g(x)$ for all $x\ge 0$, then
\[
D_f(P\|Q) \le D_g(P\|Q).
\]
\label{lemma:f_g_divergence_monotonicity}
\end{lemma}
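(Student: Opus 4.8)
The plan is to reduce the claim to a pointwise comparison of the two generators evaluated at the likelihood ratio, followed by monotonicity of expectation. First I would write both divergences explicitly from the definition, namely $D_f(P\|Q) = \mathbb{E}_{x\sim Q}\!\left[f\!\left(\tfrac{p(x)}{q(x)}\right)\right]$ and the analogous expression for $g$. The absolute-continuity hypothesis $P \ll Q$ guarantees that the ratio $p(x)/q(x)$ (equivalently the Radon--Nikodym derivative $dP/dQ$) is well-defined $Q$-almost everywhere, so both expectations are meaningful.

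The key observation is that the argument of each generator, $p(x)/q(x)$, is nonnegative for $Q$-almost every $x$, and hence lies in the domain $[0,\infty)$ on which the hypothesis $f(t)\le g(t)$ is assumed to hold. Substituting $t = p(x)/q(x)$ into this inequality yields the pointwise bound $f\!\left(\tfrac{p(x)}{q(x)}\right) \le g\!\left(\tfrac{p(x)}{q(x)}\right)$ for $Q$-almost every $x$.

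Finally, I would take the expectation over $x\sim Q$ of both sides. Since integration against the probability measure $Q$ is monotone, the pointwise inequality is preserved under the expectation, giving exactly $D_f(P\|Q)\le D_g(P\|Q)$.

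There is essentially no hard step here; the only point requiring care is verifying that the likelihood ratio falls within the domain $[0,\infty)$ on which $f\le g$ is assumed. This is precisely where $P\ll Q$ is used: it ensures the ratio is nonnegative and finite $Q$-almost everywhere, so that the generator inequality applies and both $f$-divergences are well-defined. Once this is noted, the conclusion follows immediately from monotonicity of expectation.
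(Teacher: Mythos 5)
Your proof is correct and takes essentially the same route as the paper's: both reduce the claim to the pointwise inequality $f\!\left(\tfrac{p(x)}{q(x)}\right)\le g\!\left(\tfrac{p(x)}{q(x)}\right)$ for $Q$-almost every $x$, then conclude by monotonicity of expectation under $Q$. Your remark that $P\ll Q$ is precisely what makes the likelihood ratio well-defined and nonnegative $Q$-a.e.\ is a slightly more explicit justification than the paper gives, but the argument is the same.
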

\begin{proof}
Since $f(x)\le g(x)$ pointwise, it follows for $Q$-almost every $x$ that
$f\!\left(\frac{P(x)}{Q(x)}\right)\le g\!\left(\frac{P(x)}{Q(x)}\right)$.
Taking expectations with respect to $Q$ preserves the inequality:
\[
\mathbb{E}_{x\sim Q}\!\left[f\!\left(\tfrac{P(x)}{Q(x)}\right)\right]
\le
\mathbb{E}_{x\sim Q}\!\left[g\!\left(\tfrac{P(x)}{Q(x)}\right)\right].
\]
Hence $D_f(P\|Q)\le D_g(P\|Q)$, with equality only when the two functions agree 
$Q$-almost everywhere.
\end{proof}

\begin{customlem}{2}
Let $f:[0,\infty)\!\to\!\mathbb{R}$ be any $f$-divergence generator with $f(1)=0$. 
Assume there exists $\alpha\!\ge\!0$ such that
\[
f(t)\ \le\ \alpha\,f_{\mathrm{TV}}(t)\quad\text{for all }t\ge 0,
\qquad\text{where } f_{\mathrm{TV}}(t)=\tfrac12|t-1|.
\]
Then for any policy $\pi$ and any transition kernel $T\in\mathcal{T}$,
\[
D_f\!\left(d^{\pi}_{T}(s,a,s') \,\big\|\, d^{\pi}_{T^{o}}(s,a,s')\right)
\ \le\ \alpha\,D_{\mathrm{TV}}\!\left(d^{\pi}_{T}(s,a,s'),\, d^{\pi}_{T^{o}}(s,a,s')\right).
\]
In particular, combining with Lemma~\ref{lemma:D_TV_mismatch_s_a_s_prime} yields
\[
D_f\!\left(d^{\pi}_{T}(s,a,s') \,\big\|\, d^{\pi}_{T^{o}}(s,a,s')\right)
\ \le\ \alpha\,\frac{\rho'}{1-\gamma}.
\]
As a special case, if $f(t)\le f_{\mathrm{TV}}(t)$ for all $t\ge 0$ (i.e., $\alpha=1$), then
\[
D_f\!\left(d^{\pi}_{T}(s,a,s') \,\big\|\, d^{\pi}_{T^{o}}(s,a,s')\right)
\ \le\ \frac{\rho'}{1-\gamma}.
\]
\end{customlem}

\begin{proof}
The assumption $f(t)\le \alpha f_{\mathrm{TV}}(t)$ for all $t\ge0$ implies, by 
Lemma~\ref{lemma:f_g_divergence_monotonicity}, that
\[
D_f(P\|Q)\ \le\ \alpha\,D_{f_{\mathrm{TV}}}(P\|Q)
\ =\ \alpha\,D_{\mathrm{TV}}(P,Q).
\]
Applying this to 
$P=d^{\pi}_{T}(s,a,s')$ and $Q=d^{\pi}_{T^{o}}(s,a,s')$ gives
\[
D_f\!\left(d^{\pi}_{T}\|d^{\pi}_{T^{o}}\right)
\ \le\ \alpha\,D_{\mathrm{TV}}\!\left(d^{\pi}_{T},d^{\pi}_{T^{o}}\right).
\]
Finally, using Lemma~\ref{lemma:D_TV_mismatch_s_a_s_prime}, which bounds 
$D_{\mathrm{TV}}(d^{\pi}_{T}, d^{\pi}_{T^{o}})\le \rho'/(1-\gamma)$,
we obtain the desired inequality:
\[
D_f(d^{\pi}_{T}\| d^{\pi}_{T^{o}}) \le \alpha\,\frac{\rho'}{1-\gamma}.
\]
\end{proof}

\begin{lemma}
\label{lemma:softtv_leq_tv}
For all $x \in \mathbb{R}$, the soft total variation generator satisfies
\[
f_{\mathrm{SoftTV}}(x)
=\tfrac{1}{2}\log\!\big(\cosh(x-1)\big)
\le \tfrac{1}{2}|x-1|
=f_{\mathrm{TV}}(x).
\]
\end{lemma}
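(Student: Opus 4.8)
The plan is to reduce the inequality to a clean one-variable statement and then exploit the elementary bound $\cosh \le \exp$. First I would substitute $u = x-1$, so that the claim $\tfrac12\log(\cosh(x-1)) \le \tfrac12|x-1|$ becomes, after multiplying through by $2$, the equivalent statement
\[
\log(\cosh u) \le |u| \qquad \text{for all } u \in \mathbb{R}.
\]
This removes the shift and the factor $\tfrac12$, leaving a symmetric inequality in $u$.

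The key step is to establish the pointwise bound $\cosh u \le e^{|u|}$. Writing $\cosh u = \tfrac12\big(e^{u} + e^{-u}\big)$, I would observe that both $e^{u} \le e^{|u|}$ and $e^{-u} \le e^{|u|}$ hold for every real $u$, since $u \le |u|$ and $-u \le |u|$ and $\exp$ is monotonically increasing. Averaging these two bounds gives
\[
\cosh u = \tfrac12\big(e^{u} + e^{-u}\big) \le \tfrac12\big(e^{|u|} + e^{|u|}\big) = e^{|u|}.
\]

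Finally I would apply the logarithm, which is monotonically increasing, to both sides of $\cosh u \le e^{|u|}$ to obtain $\log(\cosh u) \le \log(e^{|u|}) = |u|$, and then undo the substitution $u = x-1$ and reinstate the factor $\tfrac12$ to recover exactly $f_{\mathrm{SoftTV}}(x) \le f_{\mathrm{TV}}(x)$. There is no real obstacle here; the only point requiring a moment of care is handling the absolute value correctly, which the symmetric two-sided bound $e^{\pm u} \le e^{|u|}$ dispatches cleanly without a case split on the sign of $u$.
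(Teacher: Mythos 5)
Your proof is correct and follows essentially the same route as the paper's: substitute $t = x-1$, bound $\cosh t = \tfrac{1}{2}(e^{t}+e^{-t}) \le \tfrac{1}{2}(e^{|t|}+e^{|t|}) = e^{|t|}$, take logarithms, and rescale by $\tfrac{1}{2}$. The only difference is presentational — you spell out the two one-sided bounds $e^{\pm t} \le e^{|t|}$ that the paper uses implicitly.
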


\begin{proof}
Let $t = x - 1$. Then $\cosh t = \tfrac{e^{t} + e^{-t}}{2}$. Using this, we have
\[
\cosh t
= \frac{e^{t} + e^{-t}}{2}
\le \frac{e^{|t|} + e^{|t|}}{2}
= e^{|t|}.
\]
Taking logarithms gives $\log(\cosh t) \le |t|$.  
Multiplying both sides by $\tfrac{1}{2}$ yields
\[
\tfrac{1}{2}\log(\cosh t) \le \tfrac{1}{2}|t|,
\]
and substituting back $t = x - 1$ completes the proof.
\end{proof}

\section{Experimental Settings}
\label{appendix:experiments}

\vspace{0.5em}
\noindent\textbf{Environments and Expert Data.}
We evaluate all algorithms on four continuous-control benchmarks from the MuJoCo suite \citep{todorov2012mujoco}: Hopper-v3, HalfCheetah-v3, Walker2d-v3, and Ant-v3, following the setup of \citet{panaganti2023distributionally}.  
Expert demonstrations are generated using pre-trained TD3 \citep{fujimoto2018addressing} policies available in the RL Baselines3 Zoo repository \citep{rl-zoo3}.  
The number of expert trajectories for each environment matches the DRBC configuration to ensure fair comparison and consistent dataset coverage across all methods.  
Details of the expert dataset sizes for each environment are provided in Table~\ref{tab:drbc_hyperparams}.

\vspace{0.5em}
\noindent\textbf{DRBC and BC Baselines.}
Both DRBC and BC are implemented in PyTorch using identical policy architectures composed of two hidden layers with 256 units and \texttt{tanh} activations.  
Each model is trained for $2\times10^6$ gradient steps with a batch size of 256, using the Adam optimizer. Learning rates and decay schedules follow \citet{panaganti2023distributionally}: the initial learning rate is $1\times10^{-4}$ for all environments except HalfCheetah ($1\times10^{-5}$), decayed exponentially by a factor of $0.9$–$0.95$ every $10,000$ steps.  
The DRBC robustness parameter $\rho_r$ is set to $\{0.2, 0.3, 0.5, 0.6\}$ for Hopper, HalfCheetah, Walker2d, and Ant respectively, as summarized in Table~\ref{tab:drbc_hyperparams}.  
BC uses the same architecture and optimizer settings but excludes the robust dual updates and uncertainty term.

\vspace{0.5em}
\noindent\textbf{BE-DROIL.}
Our proposed BE-DROIL algorithm is implemented in PyTorch. The policy is parameterized by a \texttt{TanhGaussianPolicy}, consisting of two fully connected layers with 256 hidden units
, followed by a Gaussian distribution transformed through $\tanh$ to enforce action bounds.  
Both the policy and $Q-$function are trained with a learning rate of $5\times10^{-5}$ using the Adam optimizer and a batch size of 512.  
The $Q-$function follows a two-layer MLP with 256 hidden units and \texttt{ReLU} activations. The discount factor is fixed at $\gamma = 0.99$. 

The loss function $L_{\pi}(\cdot)$ minimizes the mean-squared error between the learner’s and expert’s action distributions,
\[
L_{\pi}(s) = \mathrm{MSE}(a, a_D), \quad 
a \sim \pi(\cdot|s), \; a_D \sim \pi_D(\cdot|s),
\]
where $\pi$ and $\pi_D$ denote the learner and expert policies, respectively. Training proceeds for $1$ million gradient steps, alternating between one policy update and one joint update of the Q-function and $\tau$ network.  
Unlike DRBC, no environment-specific hyperparameter tuning is performed, i.e. identical configurations are used across all domains to ensure fair cross-environment evaluation.  
The complete BE-DROIL configuration is listed in Table~\ref{tab:troildice_hyperparams}.

\vspace{0.5em}
\noindent\textbf{Evaluation Protocol.}
Each trained policy is evaluated in perturbed test environments obtained by varying physical parameters such as gravity, actuator range, and joint stiffness to induce model mismatch.  
Performance is reported as the mean and standard deviation of episodic returns over $100$ rollouts with independent random seeds.  

\vspace{0.5em}
\begin{table}[t]
\centering
\resizebox{\linewidth}{!}{
\begin{tabular}{lcccccccccc}
\toprule
\textbf{Environment} & \textbf{Expert data size $N$} & \textbf{Robustness param. $\rho_r$} &
\textbf{Policy layers} & \textbf{Activation} &
\textbf{Max steps} & \textbf{Learning rate} &
\textbf{LR decay} & \textbf{Decay rate} & \textbf{Decay freq.} \\
\midrule
Hopper-v3      & 2000 & 0.2 & (256, 256) & Tanh & 2M & $1\times10^{-4}$ & True & 0.9  & 10k \\
HalfCheetah-v3 & 3000 & 0.3 & (256, 256) & Tanh & 2M & $1\times10^{-5}$ & True & 0.9  & 10k \\
Walker2d-v3    & 6000 & 0.5 & (256, 256) & Tanh & 2M & $1\times10^{-4}$ & True & 0.95 & 10k \\
Ant-v3         & 8000 & 0.6 & (256, 256) & Tanh & 2M & $1\times10^{-4}$ & True & 0.9  & 10k \\
\bottomrule
\end{tabular}
}
\caption{Hyperparameter configuration for DRBC across MuJoCo environments.}
\label{tab:drbc_hyperparams}
\end{table}

\vspace{0.5em}
\begin{table}[t]
\centering
\resizebox{\linewidth}{!}{
\begin{tabular}{lccccccccc}
\toprule
\textbf{Component} & \textbf{Architecture} & \textbf{Hidden units} & \textbf{Activation} & 
\textbf{Batch size} & \textbf{Learning rate} & \textbf{Steps} & \textbf{$\gamma$} &
\textbf{Update ratio (policy : Q, $\tau$)} \\
\midrule
Policy $\pi$ & TanhGaussian MLP & (256, 256) & ReLU & 512 & $5\times10^{-5}$ & 1M & 0.99 & 1:1 \\
Q-function $Q$ & MLP & (256, 256) & ReLU & 512 & $5\times10^{-5}$ & 1M & 0.99 & 1:1 \\
\bottomrule
\end{tabular}
}
\caption{Hyperparameter configuration for BE-DROIL.}
\label{tab:troildice_hyperparams}
\end{table}

\section{Details of \texorpdfstring{$f$}{f}-divergence Options}
\label{appendix:f-divergence}

We summarize several generator functions $f$ commonly employed in DICE-style objectives, along with their inverse gradients and the corresponding mappings from a scaled score $z := \frac{e_{Q,\pi}(s,a,s')}{\tau}$ to the optimal nonnegative weight $w^{\star}_{Q,\tau,\pi}(s,a,s')$. 
The latter is obtained from the first-order condition $f'(w) = z$ (projected to $w \ge 0$ when required). 
These definitions and mappings are provided in Table~\ref{tab:fdist-summary}.

We define the following helper functions:
\[
\mathrm{ReLU}(x) := \max\{0,x\}, 
\qquad
\mathrm{ELU}(x) := 
\begin{cases}
e^{x}-1, & x < 0,\\
x, & x \ge 0,
\end{cases}
\]
\[
f_{\text{soft-}\chi^2}(x) :=
\begin{cases}
x\log x - x + 1, & 0 < x < 1,\\[2pt]
(x-1)^2, & x \ge 1,
\end{cases}
\qquad
\bigl(f'_{\text{soft-}\chi^2}\bigr)^{-1}(y) :=
\begin{cases}
e^{\,y}, & y < 0,\\
y + 1, & y \ge 0.
\end{cases}
\]

\begin{table}[ht!]
\centering
\setlength{\tabcolsep}{8pt}
\renewcommand{\arraystretch}{1.25}
\begin{tabular}{lccc}
\toprule
\textbf{Divergence} & \boldmath$f(x)$ & \boldmath$\bigl(f'\bigr)^{-1}(y)$ & \boldmath$w^{\star}_{Q,\tau,\pi}$ for $z = \tfrac{e_{Q,\pi}}{\tau}$ \\
\midrule
Soft TV & $\tfrac12 \log\!\bigl(\cosh(x-1)\bigr)$
& $\tanh^{-1}(2y) + 1$
& $\mathrm{ReLU}\!\bigl(\tanh^{-1}(2z) + 1\bigr)$ \\

Total Variation & $\tfrac12 |x-1|$
& — 
& — \\

KL (forward) & $x\log x$
& $e^{\,y-1}$
& $\exp(z) - 1$ \\

$\chi^2$ & $\tfrac12 (x-1)^2$
& $y+1$
& $\mathrm{ReLU}(z + 1)$ \\

Soft $\chi^2$ & $f_{\text{soft-}\chi^2}(x)$
& $\bigl(f'_{\text{soft-}\chi^2}\bigr)^{-1}(y)$
& $\mathrm{ELU}(z) + 1$ \\
\bottomrule
\end{tabular}
\caption{Summary of common $f$-generators, their inverse gradients, and the closed-form mappings from the normalized score $z$ to the optimal weight $w^{\star}_{Q,\tau,\pi}$. A dash indicates that no simple closed form exists for that column.}
\label{tab:fdist-summary}
\end{table}

\end{document}